\def\br{\boldsymbol {\rm r}}
\def\d{\!\!{\rm d}}
\def\br{\boldsymbol {\rm r}}
\newcounter{concount}
\newcounter{exampcount}
\newcounter{algcount}
\title{Analysis of Diagnostics (Part II):  Prevalence, Linear Independence, \& Unsupervised Learning}
\author{Paul N.\ Patrone\footnotemark[1] 
\and Raquel A.\ Binder\footnotemark[2]
\and Catherine S.\ Forconi\footnotemark[2]
\and Ann M.\ Moormann\footnotemark[2]
\and Anthony J.\ Kearsley\footnotemark[1]
}
\begin{document}

\renewcommand{\thefootnote}{\fnsymbol{footnote}}

\footnotetext[1]{National Institute of Standards and Technology, Gaithersburg MD 20899, USA}
\footnotetext[2]{Department of Medicine, Division of Infectious Diseases and Immunology, University of Massachusetts Chan Medical School, Worcester,
MA, 01655, USA}

\maketitle

\begin{abstract}
This is the second manuscript in a two-part series that uses diagnostic testing to understand the connection between prevalence (i.e.\ number of elements in a class), uncertainty quantification (UQ), and classification theory.  Part I considered the context of supervised machine learning (ML) and established a duality between prevalence and the concept of relative conditional probability.  The key idea of that analysis was to train a \textit{family of discriminative classifiers} by minimizing a sum of prevalence-weighted empirical risk functions.  The resulting outputs can be interpreted as relative probability level-sets, which thereby yield uncertainty estimates in the class labels.  This procedure also demonstrated that certain discriminative and generative ML  models are equivalent.  Part II considers the extent to which these results can be extended to tasks in unsupervised learning through recourse to ideas in linear algebra.  We first observe that the distribution of an \textit{impure population}, for which the class of a corresponding sample is unknown, can be parameterized in terms of a prevalence.  This motivates us to introduce the concept of \textit{linearly independent populations}, which have different but unknown prevalence values.  Using this, we identify an isomorphism between classifiers defined in terms of impure and pure populations.  In certain cases, this also leads to a nonlinear system of equations whose solution yields the prevalence values of the linearly independent populations, fully realizing unsupervised learning as a generalization of supervised learning.  We illustrate our methods in the context of synthetic data and a research-use-only SARS-CoV-2 enzyme-linked immunosorbent assay (ELISA). 
\end{abstract}

\section{Appeal to the Reader}

This is the second manuscript in a two-part series that uses examples and ideas from diagnostics to study the relationship between probability, classification, and uncertainty quantification (UQ).  In our experience, the connection between these four fields is anchored by a key concept in epidemiology, namely the \textit{prevalence} or the fraction of individuals with a given condition (e.g.\ an infection).  In public health settings, this quantity is often of inherent interest because it informs, for example, strategies for preventing the spread of a disease.  But somewhat counterintuitively, prevalence also controls the accuracy of diagnostic tests, especially in situations for which there is no so-called ``gold standard'' for detecting a medical condition.  

Despite its central role in diagnostics, the concept of prevalence appears to have been largely overlooked in the broader fields of machine learning (ML) and classification theory.  Motivated by this observation, Part I explored how prevalence leads to a new interpretation of discriminative-like classifiers as relative probability level-sets, which thereby yield  uncertainty estimates in the class labels.  Perhaps more importantly, this perspective unifies certain types of generative and discriminative ML models by demonstrating that they are mathematically equivalent.  This also reveals that certain tasks in supervised learning can be rigorously recast as statistical regression.  Part II extends these results by making new connections between unsupervised learning and linear algebra.  Our main thesis is that certain types of unsupervised training generalize supervised learning in the same way that linear independence generalizes orthogonality.

In developing its core theses, Part I adopted a perspective rooted in epidemiology and diagnostics.  We continue with that setting in this exposition.  As a result, this manuscript is interdisciplinary.  A typical reader should have a broad interest in the aforementioned fields of applied mathematics, with a willingness to ground the analysis in examples from \textit{serology (or blood testing)}.  While this manuscript is also intended to stand alone, we felt that complete duplication of material was unwarranted.  Thus, much of the associated context is omitted.  We advise the reader to consult Part I for more in-depth motivation of the methods considered herein.

\section{Introduction: Motivation and Key Questions}
\label{sec:introduction}


In traditional serology settings, there exists a population or sample space $\Omega$ of individuals $\omega$, some of whom have a specified medical condition, e.g.\ an infection.  Instead of knowing the medical status of each individual, we are given a measurement outcome $\br(\omega)$, which can be interpreted as both a diagnostic test result and a random variable.   Based on $\br(\omega)$ alone, diagnosticians often seek to answer two questions: 
\begin{itemize}
\item[Q.I] which people have the condition;
\item[Q.II] and how many people have the condition?
\end{itemize} 
The first question is clearly a classification problem; the second is the {\it prevalence estimation} problem.\footnote{Prevalence is the fraction of individuals having a condition.  In the context of serology testing, this is more appropriately called ``seroprevalence,'' which is the fraction of individuals having blood-borne markers of the condition.  We use the words prevalence and seroprevalence interchangeably.}

Within this context, there are generally two strategies for answering Question Q.I, and both require the use of \textit{pure training data}, for which the underlying sample classes are known.  Generative strategies use this  data to construct probability models of the measurement outcomes conditioned on the sample classes, which then inform a Bayes optimal classifier, for example \cite{Patrone21_1,Patrone22_1,Patrone22_2,Luke23_2,Bayes,RW}.  Discriminative approaches typically use the training data to construct a classification boundary that partitions the underlying measurement space $\Gamma$ into domains corresponding to positive and negative samples \cite{SARSClassification1,ROC,3sig1,3sig2,3sig3,Gating,OldPrevOpt,EUA,Disc2,Disc3,Hardle}.  Part I of this series demonstrated that the two approaches are mathematical converses when the classifiers are constructed by minimizing a prevalence-weighted empirical risk function \cite{PartI}.

The goal of this manuscript is to generalize the results of Part I by showing that impure training data (for which the underlying true classes are unknown) can be used to construct classifiers that are identical to those associated with pure training data.  The key ideas behind this analysis are that: (i)  impure datasets can be parameterized by a prevalence; and (ii) two such datasets having different prevalence values correspond to a type of linearly independent systems.  This leads to an isomorphism that relates classifiers associated with supervised and unsupervised training.  In this context, we also demonstrate that when the probability models have partially disjoint supports, the unknown prevalence values of the impure populations can be determined by solving a pair of nonlinear equations depending on the distribution measures.  We present numerical techniques for addressing these tasks and illustrate our analysis using synthetic and a SARS-CoV-2 serology assay.  

The motivation for this work arises from the fact that it is often difficult to acquire pure training data in diagnostic settings \cite{Unlabeled1,Unlabeled2,Unlabeled3,Bats1,Bats2}.  This is especially true in wildlife studies and for endemic conditions, such as infection of a common, widely circulating virus (e.g. influenza).  In such cases, it is often possible to find subsets of a population that have a small (unknown) prevalence, but one can rarely guarantee that every ``low'' measurement signal corresponds to a negative sample.  Likewise, for populations with a high prevalence, cross-reactivity to related viral strains can cause false-positives to be incorrectly added to a training population.  Such effects are difficult to address without explicitly accounting for the fact that they make model training an exercise in unsupervised learning.  

A key challenge in this work arises from the fact that in a binary setting with two linearly independent populations, there are in general four unknowns: two distributions that quantify the probability of a measurement outcome conditioned on the sample class, and two prevalence values.  By analogy to linear algebra, it is clear that given only the PDFs of the impure populations, it is impossible to determine all four unknowns uniquely, and in fact, this is easy to prove rigorously.  Thus to make use of the isomorphism between impure and pure training data, it is necessary to have additional information of some kind.  We resolve this  via our assumption of \textit{partially disjoint distributions}, which we justify, for example, through the intuition that the ``most positive sample'' always yields a measurement signal that is greater than any negative sample.  Even then, determining the unknowns requires a nuanced reinterpretation of prevalence and an analysis of certain classifiers as limits of sets.

As in Part I, an important limitation of our analysis is its restriction to a binary setting.  We accept this for two reasons.  First, the binary classification problem is sufficiently rich that, in our opinions, it warrants a study of its own.  But perhaps more importantly, we anticipate that an analysis of the multiclass setting follows by extension of ideas developed herein.  In certain places we point to multiclass extensions of our analysis, but the bulk of such work is left for future manuscripts.  

In comparing to Part I, recognize also that the present manuscript is primarily concerned with limiting cases, e.g.\ in which the probability densities of training populations are known to arbitrary precision.  Part I considered such cases both \textit{per se} and to motivate numerical techniques for analyzing finite datasets.  In doing so, it was necessary to precisely define concepts such as empirical test and training data.  In contrast, our primary goal in this work is to connect supervised and unsupervised learning when the probability models are given.  It turns out that this is sufficient to transfer the numerical techniques of Part I to the present setting.  For this reason, we omit some of the context surrounding empirical data, instead focusing only on those issues that are specific to analysis of unsupervised learning.  We refer the reader to Part I for a full treatment of empirical datasets.

The rest of the manuscript is organized as follows.  Section \ref{sec:setting} provides technical background and overviews the mathematical setting of diagnostic classification.  Section \ref{sec:impure} develops the theory connecting unsupervised and supervised model training.  Section \ref{sec:discussion} discusses our main results in the context of previous works and points to open questions.

\section{Mathematical Setting}
\label{sec:setting}

The purpose of this section is to provide necessary technical background in support of our main results.  Many of the ideas presented herein also appear in Part I, and we refer the reader to that manuscript for additional context.  However, we do not advise skipping this section, despite a degree of overlap.  Several concepts originally formulated in Part I are extended in this section.

\subsection{Notation}
\label{subsec:notation}

We leverage the following conventions motivated by diagnostics.
\begin{itemize}
\item[(a)] Except when referring to the sign of a number, the terms ``negative'' and ``positive'' generally denote some medical condition, e.g.\ an individual having a certain type of antibody in his or her blood sample.
\item[(b)] When used as subscripts, the letters $n$ and $p$, which we often use in place of $0$ and $1$, denote ``negative'' and ``positive'' in the sense of (a).
\item[(c)] When used as subscripts, $l$ and $h$ refer to ``low'' and ``high.''  In particular, if $q_l$ and $q_h$ are prevalence values, the indices indicate that $q_l < q_h$.   
\item[(d)] The capital letters $N$ and $P$ are associated with negative and positive populations in the sense of (a).
\end{itemize}

We also employ the following notation throughout.
\begin{itemize}
\item Bold lowercase Roman and Greek letters (e.g.\ $\br$) denote column vectors; non-bold versions are scalars.
\item Subscripts $i$, $j$, and $k$ attached to vectors denote random realizations of a vector, not its components.
\item The symbols $D$ and $B$  always refer to sets.  $B$ always denotes a ``boundary set'' between two domains.  
\item The notation $X_D$ means
\begin{align}
X_D = \int_{D}\d\br \,\,\, X(\br)
\end{align}
where $X$ is always a probability density function and $D$ is a set.  That is, $X_D$ is the measure of set $D$ with respect to the PDF $X(\br)$.  
\item The acronym {\it iid} always means ``independent and identically distributed.''
\end{itemize}

\subsection{Key Assumption}
We always assume absolute continuity of measure \cite{Tao}.  See Part I for more context on the usefulness of this assumption.

\subsection{Background Theory}
\label{subsec:bg}

Diagnostic assays are used to determine the properties of a {\it test population}.  This population is often associated with some sample space $\Omega$, and we assume that individuals $\omega\in \Omega$ belong to one of two classes, referred to colloquially as ``negative'' and ``positive.''  We denote this class by the discrete random variable $C(\omega)$.  In diagnostics, the class of an individual in a test population is typically unknown.  In its place, we are given the result $\br(\omega)$ of a diagnostic test, which is a random variable in some space $\Gamma \subset \mathbb R^n$.  The goal of classification is to deduce $C(\omega)$, given only  $\br(\omega)$.  Prevalence estimation determines the fraction of positive individuals in the population given a set of measurements $\{\br(\omega_i)\}$ indexed by $i$.  In either case, we refer to the collection of measurements being analyzed as the {\it test data}.    

\begin{remark}
When there is no risk of confusion, we omit dependence of $\br$ on $\omega$, since the latter is most often the quantity that we can access.  See Part I and Refs.\ \cite{pspace,Evans} for context associated with probability spaces needed to rigorously define our mathematical setting.  
\end{remark}

As a preliminary step, we must construct the probabilistic framework describing the measurement process of a diagnostic test.  This can be accomplished by recourse to the law of total probability \cite{totprob}.  To that end, the following concepts are foundational.

\begin{definition}[Prevalence Convention]
Let $C(\omega)\in \{0,1\}$ be a binary random variable.  We adopt the convention that $C(\omega)=0$ is a ``negative'' sample and $C(\omega)=1$ is a ``positive'' sample.  Moreover, we define the {\bf prevalence} to be the probability $q$ that $C(\omega)=1$.\label{def:prev}
\end{definition}

\begin{definition}[PDF Conventions]
Let $\br(\omega)\in \Gamma \subset \mathbb R^m$ be a random variable for some positive integer $m$.  Also let $C(\omega)$ be a binary random variable.  We refer to $P(\br)$ as the {\bf conditional probability density function} of $\br(\omega)$ conditioned on $C(\omega)=1$.  That is, it is the probability that a positive sample yields measurement value $\br$.  Likewise, $N(\br)$ is the conditional PDF for a negative sample.  
\end{definition}

The previous two definitions lead to the following model of a measurement process.
\begin{definition}[PDF of a Binary Test Population]
Let $C(\omega)$ be a binary random variable, and assume $q \in [0,1]$.  Then
\begin{align}
Q(\br;q)=qP(\br)+(1-q)N(\br)  \label{eq:qdef}
\end{align}
is the PDF that a {\bf test sample} (whose class is unknown) yields measurement $\br$.  We refer to the underlying sample space from which the $\br$ are generated to be a {\bf test population} with prevalence $q$. \label{def:qdef}
\end{definition}

\begin{remark}
For more context on the use of Defs.\ \ref{def:prev}--\ref{def:qdef}, see Part I.  Here it suffices to note that when $q=0$ or $q=1$, $Q(\br;q)$ corresponds to the PDF of a pure negative or pure positive population.  Thus, we also interpret $Q(\br;q)$, $0 < q<1$ to be the PDF of an \textbf{impure population}. 
\end{remark}

Given a sample generated by $Q(\br;q)$, diagnostic classifiers are generally identified with a partition $U=\{D_n,D_p\}$ of the measurement space $\Gamma$.  The statements $\br\in D_p$ and $\br\in D_n$ are interpreted as $\br$ corresponding to a positive or negative sample.  More precisely:
\begin{definition}[Classifier]
Let $U=\{D_n,D_p\}$ be a partition of $\Gamma$ such that
\begin{subequations}
\begin{align}
D_n \cup D_p &= \Gamma, \\
 D_n \cap D_p &= \emptyset.
\end{align} 
\end{subequations}
We refer to $U$ as a (binary) {\bf classifier} and assign $\br$ to the negative or positive class according to $\br \in D_n$ or $\br \in D_p$.  .  
\label{def:classifier}
\end{definition}

\begin{remark}
While not needed in this manuscript, observe that $U$ induces a new random variable $\hat C(\br(\omega),U)$ whose value is $j$ when $\br \in D_j$ (where $D_0=D_n$ and $D_1=D_p$).  Such a perspective establishes connections to ML techniques that do not explicitly represent classifiers in terms of sets.  However, for random variables that can be expressed as vectors, we may safely assume that there exists an underlying representation (whether explicit or not) of the classifier as a partition.  This perspective is particularly useful for making connections to probability.  Note that by assuming continuity of measure, $U$ defines an equivalence class of classifiers that only differ on sets of measure zero.  We always adopt this more general perspective when working with partitions.  See Part I for more context on these issues.
\end{remark}

An important observation of Ref.\ \cite{Patrone21_1} is that the average error rate of a diagnostic assay is a scalar-valued function of a partition.  In  particular:
\begin{lemma}[Binary Classification Error]
Let $U=\{D_n,D_p\}$ be a binary classifier.  The mapping ${\mathcal E: U \to [0,1]}$ defined via
\begin{align}
\mathcal E(U;q) = (1-q)N_{D_p} + qP_{D_n} \label{eq:error}
\end{align}
is the average or {\bf expected classification error} associated with $U$ for a test population having prevalence $q$.  \label{def:classerror}
\end{lemma}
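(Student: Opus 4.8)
The plan is to recognize $\mathcal E(U;q)$ as the expectation of the $0$--$1$ misclassification loss induced by the classifier $U$, and then to evaluate that expectation by conditioning on the (unobserved) class $C(\omega)$ via the law of total probability. The statement is effectively an exercise in bookkeeping built on Defs.\ \ref{def:prev}--\ref{def:qdef}, so I anticipate no substantive obstacle; the only care needed is in making the notion of ``average error'' precise and in using the partition structure of $U$ so the relevant event decomposes without double counting.

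First I would fix what misclassification means. Given $U=\{D_n,D_p\}$, the classifier assigns $\omega$ to the positive class exactly when $\br(\omega)\in D_p$ and to the negative class exactly when $\br(\omega)\in D_n$; because $D_n\cup D_p=\Gamma$ and $D_n\cap D_p=\emptyset$, exactly one of these holds for (almost) every $\omega$. Hence the error event is $\{C(\omega)=1,\ \br(\omega)\in D_n\}\cup\{C(\omega)=0,\ \br(\omega)\in D_p\}$, and the two events in this union are disjoint since they force different values of $C(\omega)$. Thus $\mathcal E(U;q)$, the probability of this event, is the sum of the two probabilities.

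Next I would apply the law of total probability termwise, conditioning on $C(\omega)$. By Def.\ \ref{def:prev}, $\Pr[C(\omega)=1]=q$ and $\Pr[C(\omega)=0]=1-q$; by the PDF Conventions, the conditional law of $\br(\omega)$ given $C(\omega)=1$ has density $P(\br)$ and given $C(\omega)=0$ has density $N(\br)$. Therefore $\Pr[\br(\omega)\in D_n\mid C(\omega)=1]=\int_{D_n}\d\br\,P(\br)=P_{D_n}$ and $\Pr[\br(\omega)\in D_p\mid C(\omega)=0]=N_{D_p}$, in the $X_D$ notation of Subsection \ref{subsec:notation}. Absolute continuity of measure guarantees these set measures are well defined, and, by the equivalence-class convention on partitions, independent of the chosen representative of $U$. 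Adding the two contributions gives $\mathcal E(U;q)=q\,P_{D_n}+(1-q)\,N_{D_p}$, which is Eq.\ \eqref{eq:error}.

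Finally I would verify the codomain claim $\mathcal E:U\to[0,1]$: since $P_{D_n}$ and $N_{D_p}$ are measures of subsets of $\Gamma$ with respect to probability densities, both lie in $[0,1]$, and for $q\in[0,1]$ the right-hand side is a nonnegative combination bounded above by $q+(1-q)=1$. The hardest part, such as it is, is purely expository: ensuring the reader accepts that the ``average classification error'' is by definition the expectation of the indicator of the misclassification event, and noting that the partition hypothesis on $U$ is exactly what lets that event split into the two clean pieces used above.
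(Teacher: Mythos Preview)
Your proposal is correct and follows essentially the same route the paper indicates: the paper defers to Part I for ``a straightforward proof based on the law of total probability,'' and your argument---decomposing the misclassification event into the two disjoint pieces $\{C=1,\ \br\in D_n\}$ and $\{C=0,\ \br\in D_p\}$ and then conditioning on $C(\omega)$---is exactly that computation. The additional codomain check is a harmless (and welcome) bit of bookkeeping.
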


See Part I for a straightforward proof based on the law of total probability \cite{totprob}.

A key objective in classification is to minimize the error associated with assigning class labels.  The following result addresses this issue \cite{PartI}.
\begin{lemma}[Optimal Binary Classifier]
Assume  $q\in [0,1]$.  Then the partition $U^\star$ whose elements are
\begin{subequations}
\begin{align}
D_p^\star(q) &= \{r:qP(\br) > (1-q)N(\br) \} \cup  B_p^\star(q) \label{eq:Dp}\\
D_n^\star(q) &= \{r:qP(\br) < (1-q)N(\br) \} \cup  B_n^\star(q) \label{eq:Dn}
\end{align}
\end{subequations}
minimizes $\mathcal E(U;q)$, where $B_p^\star(q)$ and $B_n^\star(q)$ form an arbitrary partition of the set ${B^\star(q)= \{r:qP(\br) = (1-q)N(\br) \}}$.  Moreover, the converse holds.  Any partition $U^\dagger$ minimizing $\mathcal E(U;q)$ is in the equivalence class defined by $U^\star$. \label{lem:optclass}
\end{lemma}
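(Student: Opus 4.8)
The plan is to reduce minimization of $\mathcal E(U;q)$ to a pointwise sign condition on the function $g_q(\br) := (1-q)N(\br) - qP(\br)$, from which both the optimal partition and its essential uniqueness follow directly. First I would combine the classifier axioms of Definition~\ref{def:classifier} with the fact that $P$ and $N$ are probability densities on $\Gamma$, so that $P_\Gamma = N_\Gamma = 1$. Since $D_n = \Gamma \setminus D_p$, we have $P_{D_n} = 1 - P_{D_p}$, and the error formula \eqref{eq:error} becomes
\[
\mathcal E(U;q) = (1-q)N_{D_p} + q\bigl(1 - P_{D_p}\bigr) = q + \int_{D_p}\d\br\, g_q(\br),
\]
with $g_q \in L^1(\Gamma)$ as a difference of integrable densities. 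Hence minimizing $\mathcal E(U;q)$ over classifiers $U$ is equivalent to minimizing $\int_{D_p}\d\br\, g_q(\br)$ over measurable $D_p \subseteq \Gamma$, with $D_n$ then forced to be $\Gamma \setminus D_p$.

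Second, I would establish the elementary bound that for every measurable $A \subseteq \Gamma$,
\[
\int_A \d\br\, g_q(\br) \;\ge\; \int_{\{g_q < 0\}} \d\br\, g_q(\br) \;=\; \int_{\{g_q \le 0\}} \d\br\, g_q(\br),
\]
obtained by splitting the integral over $A$ at $\{g_q < 0\}$: the contribution of $A \cap \{g_q < 0\}$ is at least $\int_{\{g_q < 0\}}\d\br\, g_q(\br)$, and the remaining contribution is nonnegative. The minimum is therefore attained exactly by those $D_p$ with $\{g_q < 0\} \subseteq D_p \subseteq \{g_q \le 0\}$ modulo null sets. Since $\{g_q < 0\} = \{r : qP(\br) > (1-q)N(\br)\}$ and $\{g_q = 0\} = B^\star(q)$, this is precisely the family $D_p^\star(q)$ in \eqref{eq:Dp}, which proves that $U^\star$ minimizes $\mathcal E(\cdot\,;q)$. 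The argument is uniform in $q \in [0,1]$: at $q = 0$ or $q = 1$ the set $\{g_q < 0\}$ is empty and $B^\star(q)$ degenerates to $\{N = 0\}$ or $\{P = 0\}$, so no separate endpoint analysis is needed.

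Third, for the converse I would take any minimizer $U^\dagger = \{D_n^\dagger, D_p^\dagger\}$, so that $\int_{D_p^\dagger}\d\br\, g_q = \int_{D_p^\star}\d\br\, g_q$ (both equal the minimum value), whence
\[
0 \;=\; \int_{D_p^\dagger \setminus D_p^\star}\d\br\, g_q(\br) \;-\; \int_{D_p^\star \setminus D_p^\dagger}\d\br\, g_q(\br).
\]
On $D_p^\dagger \setminus D_p^\star$ one has $g_q \ge 0$, because the complement of $D_p^\star$ misses $\{g_q < 0\}$; on $D_p^\star \setminus D_p^\dagger$ one has $g_q \le 0$, because $D_p^\star \subseteq \{g_q \le 0\}$. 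Thus the first integral above is $\ge 0$ and the second is $\le 0$; the displayed identity forces them to be equal, hence both vanish, and vanishing of the integral of a one-signed function forces $g_q = 0$ almost everywhere on each of the two pieces. Hence $D_p^\dagger \triangle D_p^\star \subseteq B^\star(q)$ up to a null set; equivalently, $U^\dagger$ agrees modulo measure zero with the member of the $U^\star$-family obtained by taking $B_p^\star(q) = D_p^\dagger \cap B^\star(q)$. So every minimizer lies in the equivalence class of $U^\star$.

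The main obstacle will be this converse, specifically the careful treatment of the tie set $B^\star(q)$ and the precise meaning of the ``equivalence class'': the sufficiency direction is a short pointwise argument, whereas uniqueness requires showing both that optimality rigidly determines $D_p$ off $B^\star(q)$ up to null sets and that the residual freedom on $B^\star(q)$ is exactly what is already built into the definition of $U^\star$ through the arbitrary partition $\{B_n^\star(q), B_p^\star(q)\}$. I would also state the normalization step and the one-signed-integral bound so that they apply verbatim at the endpoints $q \in \{0,1\}$, dispensing with a separate case analysis there.
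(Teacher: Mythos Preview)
Your argument is correct and complete: rewriting $\mathcal E(U;q) = q + \int_{D_p} g_q$ with $g_q = (1-q)N - qP$, minimizing pointwise, and then using the one-signed-integral identity to pin down uniqueness up to $B^\star(q)$ and null sets is exactly the right structure, and your handling of the endpoints $q\in\{0,1\}$ and the tie set is careful.

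As for comparison with the paper: the present manuscript does not actually prove this lemma. It simply states the result and cites Part~I of the series (reference \cite{PartI}) for the proof, so there is no in-paper argument to compare against. Your self-contained proof is the standard Bayes-classifier derivation and would serve as a drop-in replacement; there is nothing to add or correct.
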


\begin{keyremark}
Lemma \ref{lem:optclass} plays a critical role in Part I of this series.  We view it as the foundation for UQ of classification because it implies that $\mathcal E(U;q)$ can be used to quantify uncertainty in the class labels.  Moreover, this holds no matter how we construct the classifier, i.e.\ independent of our direct knowledge of $P(\br)$ and $N(\br)$.  In order to connect supervised and unsupervised learning, our main task therefore amounts to identifying an appropriate unsupervised objective function $\mathcal E_I$ that is equivalent to $\mathcal E(U;q)$, so as the guarantee that they have the same minimizer.    
\end{keyremark}

We end this section on a technical point needed to address an issue associated with sets of measure zero.  The following definition ensures that classifiers do not have degenerate sets violating the pointwise structure of $U^\star$.
\begin{definition}
We say that $U^\dagger(q)=\{D_p^\dagger,D_n^\dagger,B^\dagger\}$ satisfies the {\bf optimal partition convention} if $U^\dagger(q)$: (i) is in the equivalence class minimizing $\mathcal E(U;q)$; and (ii) ${\br \in D_p^\dagger(q) \iff qP(\br) \ge (1-q)N(\br)}$ and ${\br \in D_n^\dagger(q) \iff qP(\br) \le (1-q)N(\br)}$.  \label{def:optconv}
\end{definition}
Note that the optimal partition convention does not determine how points in $B^\star$ are assigned to sets in $U^\dagger$.  See Part I for further discussion and context on the optimal partition convention \cite{PartI}.

\section{Impure Training Data}
\label{sec:impure}

\subsection{Linear Independence}

The classifier given by $U^\star$ is impossible to realize in practice without some method for injecting information about $P(\br)$ and $N(\br)$.  In supervised settings, one is given a \textit{training data set} composed of samples whose underlying true classes are known.  These can be used to directly model the conditional PDFs, or alternatively, one can construct a boundary (for example) that best classifies the data in some appropriate sense.  Such issues amount to questions of how to model data.  We refer the reader to Part I for an in-depth analysis \cite{PartI}.  Here we seek to answer a different question, namely given a model of \textit{impure data}, for which the sample classes are unknown, can we reconstruct $U^\star$, and if so, how?  We temporarily assume that impure training data has already been analyzed in order to generate the PDFs associated with the following definitions.  We return to the issue of modeling in Secs.\ \ref{subsec:alpha} and \ref{subsec:impprev}.

\begin{notation}
It is important to distinguish properties (e.g.\ the prevalence) of impure training populations from those of independent test populations.  The former are always used for classification, whereas the latter are the objects to which classifiers are applied.  To reduce confusion,  the symbol $\alpha$ refers to the prevalence of impure training populations, whereas $q$ refers to test populations.  
\end{notation}

\begin{definition}[Linearly Independent Populations]
Let $Q_l(\br) = Q(\br;\alpha_l)$ and $Q_h(\br)=Q(\br;\alpha_h)$ be the PDFs of two test populations for which $\alpha_l < \alpha_h$.  Then we refer to the $Q(\br;\alpha_l)$ and $Q(\br;\alpha_h)$ as the PDFs of {\bf linearly independent populations.}  When clear from context, we omit the dependence of $Q(\br;\alpha_l)$ and $Q(\br;\alpha_h)$ on $\alpha_l$ and $\alpha_h$. \label{def:linind}
\end{definition}

We can motivate the concept of linearly independent populations by recalling that the $Q(\br;\alpha_j)$ ($j\in \{l,h\}$)  are defined as
\begin{align}
Q_j(\br)=\alpha_j P(\br) + (1-\alpha_j)N(\br). \label{eq:impure}
\end{align}
Thus, by {\it formally} taking linear combinations of the $Q(\br;\alpha_j)$, we anticipate being able to recover $P(\br)$ and $N(\br)$, provided the $\alpha_j$ are different.  Observe that the case $\alpha_l=0$, $\alpha_h=1$ defines linearly independent populations that are also pure.  By analogy to linear algebra, we can treat these populations as being ``orthogonal'' in the sense that the set of classes in one is disjoint from the set of classes in the other.  These observations motivate our using the PDFs of linearly independent populations to construct $U^\star$, which suggests the following definition.
\begin{definition}[Pseudoprevalence]
Let $Q_l(\br)$ and $Q_h(\br)$ be distributions associated with linearly independent populations.  
Then we define $\delta$ to be the \textbf{pseudoprevalence} associated with the \textbf{expected impure error}
\begin{align}
\mathcal E_I(U;Q_l,Q_h,\delta) = (1-\delta)\int_{D_h}\!\!\!\! Q_l(\br) \,\,\, \d \br + \delta \int_{D_l}\!\!\!\! Q_h(\br)\,\,\, \d \br, \label{eq:imperrordef}
\end{align}
where $U=\{D_l,D_h\}$.
\label{def:pseudo}
\end{definition}

\begin{keyremark}
It is tempting to interpret Eq.\ \eqref{eq:imperrordef} as implying 
\begin{align}
Q(\br;q) = \delta Q_h(\br) + (1-\delta)Q_l(\br) \nonumber
\end{align}
for some third distribution $Q(\br;q)$, which suggests that $\delta$ is in some sense a prevalence.  One can even follow this logic further to deduce that $\delta$ is a function of $q$ sharing some, but not all of the properties of Def.\ \ref{def:prev}.  However, doing so immediately constrains our interpretation of $\delta$ in a way that makes it difficult to construct optimal classifiers.  {\it Thus, for the time being, we treat $\delta$ appearing in Eq.\ \eqref{eq:imperrordef} as a free parameter whose relationship to $q$ is yet to be deduced}.  This justifies calling $\delta$ a pseudoprevalence.  It nominally plays the same role as $q$ appearing in Eq.\ \eqref{eq:error} but is not tied to the property of a population.  In Sec.\ \ref{subsec:impclass} we construct a relationship between $\delta$ and $q$.
\end{keyremark}

The structure of Eq.\ \eqref{eq:imperrordef} suggests the following corollary to Lemma \ref{lem:optclass}.
\begin{corollary}
The sets 
\begin{subequations}
\begin{align}
D_l^\star(\delta) = \{\br: \delta Q_h(\br) < (1-\delta)Q_l(\br) \} \cup B^\star_l \label{eq:dlset} \\
D_h^\star(\delta) = \{\br: \delta Q_h(\br) > (1-\delta)Q_l(\br) \} \cup B^\star_h \label{eq:dhset}\\
B_l^\star \cup B_h^\star = B^\star = \{\br: \delta Q_h(\br) = (1-\delta)Q_l(\br)\}\label{eq:blhset}
\end{align}
\end{subequations}
define an equivalence class of partitions that minimize Eq.\ \eqref{eq:imperrordef}.  Moreover, the converse is true: any partition minimizing Eq.\ \eqref{eq:imperrordef} is in the equivalence class defined by Eqs.\ \eqref{eq:dlset}--\eqref{eq:blhset}. \label{cor:lemext}
\end{corollary}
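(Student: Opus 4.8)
The plan is to derive Corollary~\ref{cor:lemext} as a direct application of Lemma~\ref{lem:optclass} by recognizing that Eq.~\eqref{eq:imperrordef} has exactly the same algebraic form as Eq.~\eqref{eq:error}. First I would set up the correspondence explicitly: in Lemma~\ref{lem:optclass} the error functional is $\mathcal E(U;q) = (1-q)N_{D_p} + qP_{D_n}$, where $U=\{D_n,D_p\}$ and $N(\br),P(\br)$ are probability densities. In Def.~\ref{def:pseudo} we have $\mathcal E_I(U;Q_l,Q_h,\delta) = (1-\delta)(Q_l)_{D_h} + \delta (Q_h)_{D_l}$ with $U=\{D_l,D_h\}$. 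The substitution is then: replace $q \mapsto \delta$, the ``negative'' density $N(\br) \mapsto Q_l(\br)$, the ``positive'' density $P(\br) \mapsto Q_h(\br)$, and correspondingly the sets $D_n \mapsto D_l$, $D_p \mapsto D_h$. Under this dictionary, $\mathcal E_I$ is literally $\mathcal E$ applied to the pair $(Q_l, Q_h)$ in place of $(N,P)$.

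The one hypothesis of Lemma~\ref{lem:optclass} that must be checked before invoking it is that $Q_l(\br)$ and $Q_h(\br)$ are bona fide probability density functions, since the lemma is stated for conditional PDFs $N,P$. This is immediate from Eq.~\eqref{eq:impure}: each $Q_j(\br) = \alpha_j P(\br) + (1-\alpha_j)N(\br)$ is a convex combination of the densities $P$ and $N$ with $\alpha_j \in [0,1]$, hence nonnegative and integrating to one; absolute continuity of measure (our standing assumption) is likewise inherited. I would state this as a one-line verification. I should also note that $\delta$ is being treated as a free parameter in $[0,1]$, consistent with the Key Remark following Def.~\ref{def:pseudo}, so the range hypothesis $q\in[0,1]$ of Lemma~\ref{lem:optclass} transfers without issue (and if $\delta$ were allowed outside $[0,1]$ one would handle it by the same degenerate-case reasoning, but that is not needed here).

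With the dictionary in place, the conclusion is a transcription. Lemma~\ref{lem:optclass} gives $D_p^\star = \{\br: qP(\br) > (1-q)N(\br)\} \cup B_p^\star$ and $D_n^\star = \{\br: qP(\br) < (1-q)N(\br)\} \cup B_n^\star$, with $B_p^\star,B_n^\star$ partitioning $B^\star = \{\br: qP(\br)=(1-q)N(\br)\}$, and it asserts both that this minimizes the error and that any minimizer lies in this equivalence class. Applying the substitution yields precisely Eqs.~\eqref{eq:dlset}--\eqref{eq:blhset}: $D_h^\star(\delta) = \{\br: \delta Q_h(\br) > (1-\delta)Q_l(\br)\}\cup B_h^\star$, $D_l^\star(\delta) = \{\br: \delta Q_h(\br) < (1-\delta)Q_l(\br)\}\cup B_l^\star$, and $B_l^\star\cup B_h^\star = B^\star = \{\br:\delta Q_h(\br)=(1-\delta)Q_l(\br)\}$. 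The minimization claim and its converse carry over verbatim from the ``moreover'' clause of Lemma~\ref{lem:optclass}.

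There is no real obstacle here; the only thing to be careful about is bookkeeping on which density plays the role of ``positive.'' The natural pairing sends the \emph{high}-pseudoprevalence weight $\delta$ to multiply $Q_h$ and the low weight $(1-\delta)$ to multiply $Q_l$, which matches the way $q$ multiplies $P$ and $(1-q)$ multiplies $N$ in Eq.~\eqref{eq:error}; getting this orientation right is what makes Eqs.~\eqref{eq:dlset}--\eqref{eq:dhset} come out with the inequalities as written (note $D_l^\star$ carries the strict inequality $\delta Q_h < (1-\delta)Q_l$, the analogue of $qP < (1-q)N$ defining $D_n^\star$). I would therefore keep the proof to roughly: (i) verify $Q_l,Q_h$ are PDFs; (ii) state the substitution dictionary; (iii) observe $\mathcal E_I$ becomes $\mathcal E$; (iv) quote Lemma~\ref{lem:optclass} and translate back. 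The whole argument is a few sentences, and the phrase ``the structure of Eq.~\eqref{eq:imperrordef} suggests the following corollary'' in the excerpt signals that the authors intend exactly this pattern-matching proof.
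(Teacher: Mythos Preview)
Your proposal is correct and matches the paper's approach exactly: the paper simply states the result as a corollary to Lemma~\ref{lem:optclass} without writing out a proof, relying on the evident structural identity between Eq.~\eqref{eq:imperrordef} and Eq.~\eqref{eq:error} under the substitution $(q,N,P,D_n,D_p)\mapsto(\delta,Q_l,Q_h,D_l,D_h)$. Your explicit verification that $Q_l,Q_h$ are bona fide PDFs and your careful bookkeeping on the orientation of the inequalities are exactly the details one would fill in, and nothing more is needed.
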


\subsection{Classification with Impure Training Data}
\label{subsec:impclass}

We next consider how Def.\ \ref{def:pseudo} can be used for classification.  The following example provides motivation.

\begin{example}  Assume two impure populations whose prevalence values satisfy $\alpha_l < \alpha_h$.  By simple re-arrangement, it is straightforward to show that 
\begin{subequations}
\begin{align}
Q_h(\br) = Q_l(\br) &\iff  P(\br) = N(\br),\\
Q_h(\br) > Q_l(\br) &\iff  P(\br) >  N(\br),\\
Q_h(\br) < Q_l(\br) &\iff  P(\br) <  N(\br).
\end{align}
\end{subequations}
In other words, the $\delta=0.5$ and $q=0.5$ classification domains constructed from impure and pure distributions are the same, and we may use $Q_h(\br)$ and $Q_l(\br)$ as proxies for $P(\br)$ and $N(\br)$ when classifying.  Doing so yields
\begin{subequations}
\begin{align}
D_h^\star = D_p^\star(0.5) &= \{r: Q_h(\br) > Q_l(\br) \} \cup B_h^\star\\
D_l^\star = D_n^\star(0.5) &= \{r: Q_h(\br) < Q_h(\br) \} \cup B_l^\star\\
B^\star &= \{r: Q_h(\br) = Q_l(\br)\} = B_h^\star \cup B_l^\star.
\end{align}
\end{subequations}
Thus, {\it any} linearly independent population yields the same equivalence class of partitions for $q=0.5$. \label{ex:q5}
\end{example}

\begin{remark}
Example \ref{ex:q5} does not require knowledge of $\alpha_l$ and $\alpha_h$.  In retrospect, this is not surprising.  Since the $B(q=0.5)$ is the set of $\br$ for which $P(\br)=N(\br)$, all convex combinations of these PDFs remain equal on $B$.  See Fig.\ \ref{fig:convexcombo}.  However, as the next lemma demonstrates, this simplification is strongly related to the fact that $q=0.5$.   
\end{remark}

\begin{figure}\begin{center}
\includegraphics[width=8cm]{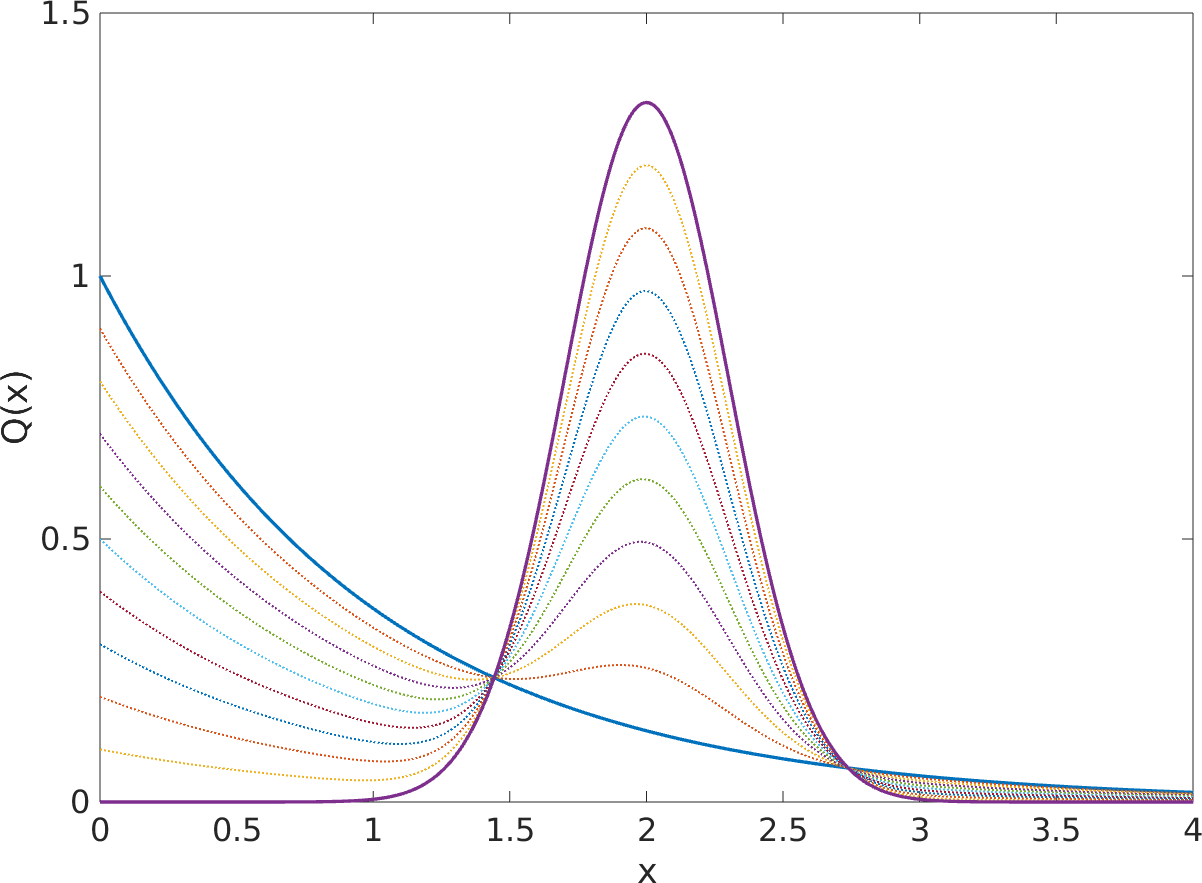}\end{center}\caption{Different convex combinations of a Gaussian and exponential distribution.  Letting $f_1(x)$ and $f_2(x)$ denote the corresponding probability densities, the figure shows 11 convex combinations $Q_j(x)=\alpha_jf_1(x)+(1-\alpha_j)f_2(x)$ for $\alpha = 0,0.1,0.2,...,1$.  Note that all convex combinations intersect at the same two points.}\label{fig:convexcombo}
\end{figure}

\begin{proposition}[Optimal Classifiers with Impure Data]\label{prop:impure}
Let $Q_l(\br)$ and $Q_h(\br)$ be the PDFs associated with impure training data.  Without loss of generality, assume that $\alpha_l < \alpha_h$.  Then the function 
\begin{align}
\delta(q) = \frac{q(1-\alpha_l) + (1-q)\alpha_l}{(\alpha_l+\alpha_h)(1-q) + (2-\alpha_l-\alpha_h)q} \label{eq:delq}
\end{align}
on the domain $q\in[0,1]$ has the additional properties that:
\begin{itemize}
\item[(a)] its range is $\delta\in [\delta_l,\delta_h]$, where
\begin{align}
\delta_l &= \frac{\alpha_l}{\alpha_l+\alpha_h} < \frac{1}{2}, & \delta_h = \frac{1 - \alpha_l}{2-\alpha_l-\alpha_h} > \frac{1}{2}; \label{eq:delbds}
\end{align}
\item[(b)] it is a strictly monotone increasing function of $q$, and hence a one-to-one mapping between $[0,1]$ and $[\delta_l,\delta_h]$;
\item[(c)] and it yields the optimal classification domains
\begin{subequations}
\begin{align}
\hspace{-5mm}D_n^\star(q)  &= \{\br: \delta(q)Q_h(\br) < (1-\delta(q))Q_l(\br) \} \cup B_l^\star \label{eq:ID0} \\
\hspace{-5mm}D_p^\star(q)  &= \{\br: \delta(q)Q_h(\br) > (1-\delta(q))Q_l(\br) \} \cup B_h^\star \label{eq:ID1}\\
\hspace{-5mm}B^\star(q) &= B_l^\star \cup B_h^\star \nonumber \\ &= \{\br: \delta(q)Q_h(\br) = (1-\delta(q))Q_l(\br) \} \label{eq:IB}
\end{align}
\end{subequations}
\end{itemize}
where $B_l^\star$ and $B_h^\star$ form an arbitrary partition of the boundary set.  In other words, Eqs.\ \eqref{eq:ID0} -- \eqref{eq:IB} define the same equivalence class of partitions as Eqs.\ \eqref{eq:Dp} and \eqref{eq:Dn}.  Moreover, for an arbitrary partition $U=\{D_l,D_h\}$ there exists a constant $C$ independent of $U$ such that
\begin{align}
(\alpha_h-\alpha_l)\mathcal E(U;q) + C &= \mathcal E_I(U;Q_l,Q_h,\delta(q))  \label{eq:linrel} \\ &= \delta(q)\!\! \int_{D_l}\!\!\! Q_h(\br) dr + (1-\delta(q))\!\! \int_{D_h}\!\!\! Q_l(\br) dr. \nonumber
\end{align}
\end{proposition}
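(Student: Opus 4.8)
The plan is to start from the definition of the impure error $\mathcal E_I(U;Q_l,Q_h,\delta)$ in Eq.~\eqref{eq:imperrordef}, substitute the representations $Q_l(\br) = \alpha_l P(\br) + (1-\alpha_l)N(\br)$ and $Q_h(\br) = \alpha_h P(\br) + (1-\alpha_h)N(\br)$ from Eq.~\eqref{eq:impure}, and expand everything into integrals of $P$ and $N$ over $D_l$ and $D_h$. Using $U=\{D_l,D_h\}$ as a partition of $\Gamma$, I would rewrite $\int_{D_l} P = P_\Gamma - \int_{D_h} P = 1 - P_{D_h}$ and similarly for $N$, so that the whole expression collapses to an affine function of the two quantities $N_{D_p}$ and $P_{D_n}$ (identifying $D_p=D_h$, $D_n=D_l$) that appear in $\mathcal E(U;q)$ from Eq.~\eqref{eq:error}. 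The goal of the bookkeeping is to show that the coefficient of $N_{D_p}$ in $\mathcal E_I$ equals $(\alpha_h-\alpha_l)(1-q)$ and the coefficient of $P_{D_n}$ equals $(\alpha_h-\alpha_l)q$, for the stated choice of $\delta(q)$, with everything left over being a $U$-independent constant $C$. This is essentially the content of Eq.~\eqref{eq:linrel}, and once it holds, parts (b) and (c) follow quickly.

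Concretely, after expansion one finds $\mathcal E_I$ has the form $a\,N_{D_p} + b\,P_{D_n} + (\text{const})$ where $a$ and $b$ are linear in $\delta$ with coefficients built from $\alpha_l,\alpha_h$. I would then impose the single proportionality requirement $a/b = (1-q)/q$ — which is exactly what is needed for $\mathcal E_I$ to share a minimizer with $\mathcal E(\cdot;q)$ — and solve the resulting linear equation in $\delta$; this should reproduce Eq.~\eqref{eq:delq}. For part (a), I would evaluate $\delta(0)$ and $\delta(1)$ directly to get $\delta_l$ and $\delta_h$ as in Eq.~\eqref{eq:delbds}, and check the strict inequalities $\delta_l<\tfrac12<\delta_h$ using $\alpha_l<\alpha_h$ together with $\alpha_l,\alpha_h\in[0,1]$ (e.g.\ $\delta_l-\tfrac12 = (\alpha_l-\alpha_h)/(2(\alpha_l+\alpha_h)) < 0$). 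For part (b), monotonicity, I would write $\delta(q)$ as $(Aq+B)/(Cq+D)$ by collecting the linear-in-$q$ numerator and denominator, and show the sign of the determinant $AD-BC$ is positive; since a Möbius function of $q$ is monotone wherever its denominator doesn't vanish, and the denominator $(\alpha_l+\alpha_h)(1-q)+(2-\alpha_l-\alpha_h)q$ is a positive convex combination hence strictly positive on $[0,1]$, strict monotonicity and the bijection onto $[\delta_l,\delta_h]$ follow.

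For part (c), I would argue as follows. Because $\delta(q)$ is chosen so that $(\alpha_h-\alpha_l)\mathcal E(U;q) + C = \mathcal E_I(U;Q_l,Q_h,\delta(q))$ as an identity in $U$, and $\alpha_h-\alpha_l>0$, the two functionals have exactly the same set of minimizing partitions. By Lemma~\ref{lem:optclass} the minimizers of $\mathcal E(\cdot;q)$ are the equivalence class of $U^\star$ given by Eqs.~\eqref{eq:Dp}--\eqref{eq:Dn}; by Corollary~\ref{cor:lemext} (applied with pseudoprevalence $\delta(q)$) the minimizers of $\mathcal E_I(\cdot;Q_l,Q_h,\delta(q))$ are exactly the class described by Eqs.~\eqref{eq:dlset}--\eqref{eq:blhset}, which after relabeling $D_l^\star\to D_n^\star$, $D_h^\star\to D_p^\star$ are Eqs.~\eqref{eq:ID0}--\eqref{eq:IB}. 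Equating the two equivalence classes gives the claim. The one subtlety to handle carefully is the boundary set: I need that the $\delta(q)$-boundary $\{\br:\delta(q)Q_h(\br)=(1-\delta(q))Q_l(\br)\}$ coincides (up to measure zero) with the $q$-boundary $\{\br:qP(\br)=(1-q)N(\br)\}$, which I would verify by plugging in Eq.~\eqref{eq:impure} and checking the linear relation $\delta Q_h - (1-\delta)Q_l = \kappa\,[\,qP-(1-q)N\,]$ for a strictly positive scalar $\kappa=\kappa(q,\alpha_l,\alpha_h)$ — the same computation that drove the whole proof.

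\textbf{Main obstacle.} The hard part will not be any single deep idea but rather doing the affine algebra cleanly: keeping track of which combination of $\int_{D_l}P,\int_{D_l}N,\int_{D_h}P,\int_{D_h}N$ appears, using $P_\Gamma=N_\Gamma=1$ to eliminate the redundant ones, and verifying that the leftover $U$-independent terms really are constant (they should reduce to expressions in $\alpha_l,\alpha_h,q$ only). The place where an error is most likely is the identification of the proportionality constant $\kappa$ and hence the precise formula \eqref{eq:delq} for $\delta(q)$; I would double-check it by confirming the two endpoint values $\delta(0)=\delta_l$, $\delta(1)=\delta_h$ and by confirming that at $q=\tfrac12$ it gives $\delta=\tfrac12$ when $\alpha_l,\alpha_h$ are symmetric about $\tfrac12$, consistent with Example~\ref{ex:q5}.
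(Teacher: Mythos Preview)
Your proposal is correct and follows essentially the same route as the paper: expand $\mathcal E_I$ via Eq.~\eqref{eq:impure}, use normalization to reduce to an affine function of $P_{D_n}$ and $N_{D_p}$, and match coefficients to $\mathcal E(U;q)$. The only organizational difference is that the paper establishes part~(c) first by the direct pointwise equivalence $\delta Q_h(\br)\gtrless(1-\delta)Q_l(\br)\iff qP(\br)\gtrless(1-q)N(\br)$ (your $\kappa$-identity) and then verifies Eq.~\eqref{eq:linrel} separately, whereas you deduce~(c) from Eq.~\eqref{eq:linrel} via Corollary~\ref{cor:lemext}; and for~(b) the paper simply computes $\delta'(q)>0$ rather than invoking the M\"obius determinant---but these are cosmetic.
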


\begin{proof}
Property (a) follows from (b) and the facts that $\delta(0)=\delta_l$ and $\delta(1)=\delta_h$.  To prove (b), it suffices to compute the derivative of $\delta(q)$ with respect to $q$.  Straightforward calculations show this to be positive (and in fact, it is proportional to $\alpha_h-\alpha_l$).    

Property (c) arises by considering inequalities of the form
\begin{align}
\delta Q_h(\br) > (1-\delta) Q_l(\br). \label{eq:meq}
\end{align}
Invoking the definition of $Q_l(\br)$ and $Q_h(\br)$, straightforward rearrangement shows that inequality \eqref{eq:meq} is equivalent to
\begin{align}
[\delta \alpha_h - (1-\delta)\alpha_l]P(\br) &>  [(1-\delta) (1-\alpha_l) -\delta(1-\alpha_h)]N(\br), \nonumber \\
\implies qP(\br) &> (1-q)N(\br), \label{eq:implied}
\end{align}
where the last line arises from Eq.\ \eqref{eq:delq}.  Identical arguments apply if the inequality sign is reversed or replaced with equality. 

Equation \ref{eq:linrel} is verified directly but substituting the definitions of $Q_h$ and $Q_l$ into Eq.\ \eqref{eq:imperrordef} and using the fact that all PDFs are normalized on $\Gamma$.  
\end{proof}

\begin{keyremark} \label{rem:merror}
The key implication of Propostion \ref{prop:impure} is that we may construct optimal classification domains for the {\it pure} problem by minimizing Eq.\ \eqref{eq:imperrordef}.  \textit{By virtue of Corollary \ref{cor:lemext}, we need not even know $Q_h$ and $Q_l$ directly}.  The logic is as follows.  Suppose we find a classifier $U^\dagger$ that minimizes Eq.\ \eqref{eq:imperrordef}.  By Corollary \ref{cor:lemext} this is in the same equivalence class as $U^\star$ given by Eqs.\ \eqref{eq:dlset}--\eqref{eq:blhset}.  But by Proposition \ref{prop:impure} this is in one-to-one correspondence with the equivalence class defined by Eqs.\ \eqref{eq:Dp}--\eqref{eq:Dp}.  Thus $\mathcal E_I(U;Q_l,Q_h,\delta(q))$ is in theory equivalent to $\mathcal E(U;q)$, which is the mathematical content of Eq.\ \eqref{eq:linrel}.

In practice the situation is more complicated as we must first determine $\alpha_l$ and $\alpha_h$.  This is addressed in the next section.  For now, however, observe the importance of the linear independence assumption.  When $\alpha_l=\alpha_h$, the relationship between $\mathcal E$ and $\mathcal E_I$ becomes vacuous; one finds that $\delta(q) \to 1/2$ for all $q$, which is consistent with Example \ref{ex:q5}.  The reader wishing to rederive many of the results herein will find that the difference $\alpha_h-\alpha_l$ appears often.  
\end{keyremark}

\begin{notation}  When expressing the optimal partition $U$ in terms of $\delta$, we use the symbols $\mathbb D_p^\star(\delta)=D_p^\star(q(\delta))$ and $\mathbb D_n^\star(\delta)=D_n^\star(q(\delta))$.  
\end{notation}


\begin{example}
Consider impure training data in $\mathbb R^2$ for which
\begin{subequations}
\begin{align}
P(x,y)&=\frac{1}{\pi}e^{-\frac{(x-2)^2}{2} - 2(y-2)^2} \label{eq:PDF1} \\
N(x,y)&=\frac{1}{2\pi}e^{-\frac{x^2+y^2}{2}}\label{eq:PDF2} \\
Q_h(x,y)&=\alpha_hP(x,y)+(1-\alpha_h)N(x,y)\label{eq:PDF3} \\
Q_l(x,y)&=\alpha_lP(x,y)+(1-\alpha_l)N(x,y).\label{eq:PDF4}
\end{align}
\end{subequations}
Part I shows that the classification boundaries for such pure distributions can be constructed by considering ratios of the form
\begin{align}
\frac{N(x,y)}{P(x,y)} = \frac{q}{1-q}= \mathcal R = \frac{1}{2}e^{-\frac{x^2+y^2}{2} + \frac{(x-2)^2}{2} + 2(y-2)^2}.  \label{eq:ratioeq}
\end{align}
For $\mathcal R \in (0,\infty)$, solutions to this equation yield the sets $B^\star(q)$, which divide $\Gamma$ into the optimal domains $D_p^\star(q)$ and $D_n^\star(q)$.  By taking the logarithm of \cref{eq:ratioeq}, one finds that each set $B^\star(q)$ is the locus of points on a quadratic curve.  See also \cref{ex:gaussian}.

We can similarly express the ratio
\begin{align}
\frac{Q_l(x,y)}{Q_h(x,y)} &= \frac{\alpha_lP(x,y) + (1-\alpha_l)N(x,y)}{\alpha_hP(x,y) + (1-\alpha_h)N(x,y)} \nonumber \\
&= \gamma + \frac{\beta N(x,y)}{\alpha_hP(x,y) + (1-\alpha_h)N(x,y)} \nonumber \\
&= \gamma + \frac{\beta \mathcal R}{\alpha_h + (1-\alpha_h)\mathcal R} = \frac{\delta}{1-\delta} \label{eq:impureratioeq}
\end{align}
for some constants $\gamma < 1$ and $\beta < 1$.  Since $\mathcal R \in (0,\infty)$, the left-hand side (LHS) of the last line ranges from $\gamma$ to $\gamma + \beta/(1-\alpha_h)$, which means that $\delta$ is restricted to a proper subset of the interval $(0,1)$.  Moreover, since \cref{eq:ratioeq} and \cref{eq:impureratioeq} both only depend on $\mathcal R$, it is clear that they share the same boundary sets.  In fact, this example holds more generally, since it does not rely on the specific form of $\mathcal R$.  
\end{example}

\subsection{Estimating Prevalence of Impure Training Data}
\label{subsec:alpha}

Section \cref{subsec:impclass} assumes that the $\alpha_j$ are known, but this is rarely true in practice.  We overcome this problem by leveraging the following discontinuity.

\begin{lemma}[Boundary Jumping] \label{lem:alpha}
Assume that $U$ minimizing $\mathcal E_I(U;Q_l,Q_h,\delta)$ satisfy the optimal partition convention.  Assume also that the supports of $P(\br)$ and $N(\br)$ are partially disjoint, i.e.\ that  there exist sets $\tilde D_p$ and $\tilde D_n$ such that
\begin{subequations}
\begin{align}
P_{\tilde D_p}=\int_{\tilde D_p} P(\br)\,\, \d r &> 0, & N_{\tilde D_p}=\int_{\tilde D_p} N(\br) = 0, \\
N_{\tilde D_n}=\int_{\tilde D_n} N(\br)\,\, \d r &> 0, & P_{\tilde D_n}=\int_{\tilde D_n} P(\br) = 0.
\end{align}
\end{subequations}
Without loss of generality, let $\alpha_l < \alpha_h$.  Then for any linearly independent populations, the measures $P_{\mathbb D_p^\star(\delta)}$ and $Q_{h,\mathbb D_p^\star(\delta)}$ are discontinuous at $\delta = \delta_l$.  Similarly, $N_{\mathbb D_n^\star(\delta)}$ and $Q_{l,\mathbb D_n^\star(\delta)}$ are discontinuous at $\delta=\delta_h$.
\end{lemma}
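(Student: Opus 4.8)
The plan is to convert the claimed discontinuities into an elementary sign analysis of the affine coefficients that appear when the defining relation of the optimal impure classifier is rewritten in terms of $P(\br)$ and $N(\br)$. Throughout, I read $\mathbb D_p^\star(\delta)$ and $\mathbb D_n^\star(\delta)$ not merely as $D_p^\star(q(\delta))$ on $[\delta_l,\delta_h]$ but as the minimizers of $\mathcal E_I(U;Q_l,Q_h,\delta)$ supplied by Corollary~\ref{cor:lemext} for every admissible $\delta\in(0,1)$, together with the optimal partition convention (the analogue of Definition~\ref{def:optconv} for $\mathcal E_I$); by Proposition~\ref{prop:impure} the two readings agree on $[\delta_l,\delta_h]$, but the extended reading is what makes ``discontinuous at the endpoint'' meaningful. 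Substituting $Q_h=\alpha_h P+(1-\alpha_h)N$ and $Q_l=\alpha_l P+(1-\alpha_l)N$ into $\delta Q_h(\br)\ge(1-\delta)Q_l(\br)$ and rearranging exactly as in the proof of Proposition~\ref{prop:impure} turns it into $a(\delta)P(\br)\ge b(\delta)N(\br)$ with
\begin{align}
a(\delta)=\delta(\alpha_l+\alpha_h)-\alpha_l, \qquad b(\delta)=(1-\alpha_l)-\delta(2-\alpha_l-\alpha_h). \nonumber
\end{align}
From \eqref{eq:delbds} one reads off $a(\delta_l)=0$, $a(\delta)>0\iff\delta>\delta_l$, $b(\delta_h)=0$, $b(\delta)>0\iff\delta<\delta_h$; under the convention, $\mathbb D_p^\star(\delta)=\{\br:a(\delta)P(\br)\ge b(\delta)N(\br)\}$ and $\mathbb D_n^\star(\delta)=\{\br:a(\delta)P(\br)\le b(\delta)N(\br)\}$.

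For the jump at $\delta_l$ I compare the two sides. For $\delta$ slightly below $\delta_l$ one has $a(\delta)<0<b(\delta)$ (using $\delta<\delta_l<\delta_h$), so with $P,N\ge0$ the relation $a(\delta)P(\br)\ge b(\delta)N(\br)$ can hold only on $\{P=0\}\cap\{N=0\}$; this set is Lebesgue-null, hence $P$- and $Q_h$-null by the standing absolute-continuity assumption, and therefore $P_{\mathbb D_p^\star(\delta)}=Q_{h,\mathbb D_p^\star(\delta)}=0$. At $\delta=\delta_l$, by contrast, $a(\delta_l)=0$ and $b(\delta_l)>0$, so the convention collapses the classifier onto $\mathbb D_p^\star(\delta_l)=\{\br:N(\br)=0\}$. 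Now the partial-disjointness hypothesis gives $\tilde D_p$ with $N_{\tilde D_p}=0$ and $P_{\tilde D_p}>0$; since $N\ge0$, $N_{\tilde D_p}=0$ forces $N=0$ Lebesgue-almost everywhere on $\tilde D_p$, so $\tilde D_p$ sits inside $\{N=0\}$ up to a Lebesgue-null, hence $P$-null, set, whence $P_{\mathbb D_p^\star(\delta_l)}\ge P_{\tilde D_p}>0$. Finally $Q_{h,\mathbb D_p^\star(\delta_l)}=\alpha_h P_{\{N=0\}}+(1-\alpha_h)N_{\{N=0\}}=\alpha_h P_{\{N=0\}}>0$ because $0\le\alpha_l<\alpha_h$ gives $\alpha_h>0$. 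Thus neither $P_{\mathbb D_p^\star(\delta)}$ nor $Q_{h,\mathbb D_p^\star(\delta)}$ is left-continuous at $\delta_l$.

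The statement at $\delta_h$ is the mirror image. For $\delta$ slightly above $\delta_h$ one has $a(\delta)>0>b(\delta)$, so $a(\delta)P(\br)\le b(\delta)N(\br)$ again forces $P=N=0$ and hence $N_{\mathbb D_n^\star(\delta)}=Q_{l,\mathbb D_n^\star(\delta)}=0$. At $\delta=\delta_h$, $b(\delta_h)=0$ and $a(\delta_h)>0$, so the convention gives $\mathbb D_n^\star(\delta_h)=\{\br:P(\br)=0\}$; partial disjointness and absolute continuity as above yield $N_{\mathbb D_n^\star(\delta_h)}\ge N_{\tilde D_n}>0$, and $Q_{l,\mathbb D_n^\star(\delta_h)}=\alpha_l P_{\{P=0\}}+(1-\alpha_l)N_{\{P=0\}}=(1-\alpha_l)N_{\{P=0\}}>0$ since $\alpha_l<\alpha_h\le1$ gives $1-\alpha_l>0$. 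Hence $N_{\mathbb D_n^\star(\delta)}$ and $Q_{l,\mathbb D_n^\star(\delta)}$ are not right-continuous at $\delta_h$.

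I expect the main difficulty to be conceptual rather than computational: recognizing that the discontinuity lives just outside $[\delta_l,\delta_h]$, where Corollary~\ref{cor:lemext} forces the relevant classifier set to be a null set, and keeping careful track of how the optimal partition convention dumps the \emph{entire} boundary set $\{\br:\delta Q_h(\br)=(1-\delta)Q_l(\br)\}$ into $\mathbb D_p^\star$ at $\delta_l$ (and into $\mathbb D_n^\star$ at $\delta_h$). The hypotheses then play clearly identifiable roles: partial disjointness is precisely what guarantees that this boundary set carries positive $P$-mass (respectively $N$-mass), so that the jump is genuine and not spurious, while the absolute-continuity assumption is what licenses passing between the measure statement $N_{\tilde D_p}=0$ and the containment ``$\tilde D_p\subseteq\{N=0\}$ up to a $P$-null set.'' One should also note the degenerate cases $\alpha_l=0$ or $\alpha_h=1$, for which $\delta_l$ or $\delta_h$ is an endpoint of $(0,1)$ and the conclusion should be read as a one-sided discontinuity; the computations above are otherwise unchanged.
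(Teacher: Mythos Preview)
Your proof is correct and follows essentially the same route as the paper: both show that for $\delta<\delta_l$ the set $\mathbb D_p^\star(\delta)$ is (effectively) empty, while at $\delta_l$ it equals $\{N=0\}$, which carries positive $P$-mass by partial disjointness. The only cosmetic difference is that the paper reaches $\{N=0,\,P>0\}$ by taking the right limit $\delta\downarrow\delta_l$ as a nested intersection (invoking the monotonicity of $\mathbb D_p^\star(\delta)$ from Part~I), whereas you evaluate directly \emph{at} $\delta_l$ via the optimal partition convention and the sign analysis of the affine coefficients $a(\delta),b(\delta)$; your framing is slightly more economical since it avoids the monotonicity lemma and the $\delta\leftrightarrow q$ translation, at the cost of leaning explicitly on the convention at the endpoint.
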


\begin{proof}
Note first that when the optimal partition convention holds, the sets $\mathbb D_p^\star(\delta)$ are monotone decreasing as $\delta$ decreases.  That is, $\mathbb D_p^\star(\delta) \subset \mathbb D_p^\star(\delta')$ for $\delta < \delta'$; see Proposition 5.4 of Part I \cite{PartI} for justification.  Thus, by Proposition \ref{prop:impure}, within the equivalence class of partitions, the limit $\delta \downarrow \delta_l$ yields the set of $\br$ defined by the condition
\begin{align}
\lim_{\delta \to \delta_l^+}\mathbb D_{p}^\star(\delta) &= \lim_{q\to 0^+} D_p^\star(q)\nonumber \\
& = \lim_{\epsilon \to 0^+} \left\{\br:  \epsilon^{-1} N(\br) \le P(\br) \right \} \label{eq:ep}
\end{align}
We can  express this limiting set as the intersection
\begin{subequations}
\begin{align}
\lim_{q\to 0^+} D_p^\star(q) &= \bigcap_{j=1}^\infty \left\{\br: j N(\br) < P(\br) \right\} \label{eq:drl} \\
&=\{\br: N(\br)=0, P(\br) > 0\}=: D_p^\star(\delta_l^+) \label{eq:drl2}
\end{align}
\end{subequations}
To see the equality between the first and second lines, argue by contradiction.  Specifically, let there be an $\br\in \mathbb D_{p}^\star(\delta_l^+)$ such that $N(\br)>0$ and $P(\br)> 0$.  However, one can find a $j$ such that the inequality in Eq.\ \eqref{eq:drl} is violated, which means that $\br$ cannot be in every set of the intersection.  Moreover, we need not worry about points $\br$ for which $P(\br)=0$ and $N(\br)=0$, since they can be excluded from $\Gamma$.  By the assumption that the conditional PDFs are partially disjoint, we find that $\tilde D_p \subset \mathbb D_p^\star(\delta_l^+)$, and thus
\begin{align}
\lim_{\delta \downarrow \delta_l} P_{\mathbb D_p^\star(\delta)} = P_{\mathbb D_p^\star(\delta_l^+)} > 0.
\end{align}  

If in constrast, for any $\delta < \delta_l$, recourse to the definitions of $Q_l(\br)$ and $Q_h(\br)$ and Proposition \ref{prop:impure} imply that
\begin{align}
\mathbb D_p^\star(\delta) = \{r:N(\br) < 0\} = \emptyset.
\end{align}
Thus, $P_{\mathbb D_p^\star(\delta)} = 0$ for any $\delta < \delta_l$, so that
\begin{align}
\lim_{\delta \uparrow \delta_l} P_{\mathbb D_p^\star(\delta)} = 0,
\end{align}  
which yields a discontinuity for measures with respect to $P(\br)$.  By definition of the left and right limits of $\mathbb D_p^\star(\delta)$ at $\delta=\delta_l$, we see that $Q_h(\br) \propto P(\br)$ for every $\br$ in either set, so that the result also holds for measures with respect to $Q_h$.  Similar arguments yield the corresponding discontinuity for $\mathbb D_n^\star(\delta)$.  
\end{proof}

At first blush, Lemma \ref{lem:alpha} simply appears to illustrate the extent to which sets of zero measure (with respect to $Q(\br)$ in this case) need not be unique.  Viewed in the context of Eq.\ \eqref{eq:error}, this is not surprising.  The limit $\delta \downarrow \delta_l$ corresponds to $q \to 0$.  Thus the term $q\int_{D_n}P(\br) \,\,\d r$ disappears from the objective function.  In this case, we may set $D_p$ to be any set we wish, provided $N(\br)$ has zero measure there.  

For {\it impure} populations, however, we must construct the optimal partition by minimizing Eq.\ \eqref{eq:imperrordef}.  Moreover, we may always construct a family of such partitions by varying $\delta$, which we can do even when the $\alpha_j$ are unknown; see Remark \ref{rem:merror}.  Here something interesting happens.  When $\delta \downarrow \delta_l \ne 0$ (in particular, when $\alpha_l > 0$), the set $\mathbb D_p^\star(\delta) = D_p^\star(0)$ encounters the discontinuity embodied by Lemma \ref{lem:alpha}.  This happens despite neither term on the RHS of Eq.\ \eqref{eq:imperrordef} vanishing.  One therefore finds that when $Q_h(\br)$ is treated as if it were the PDF for a pure population: (i) all points in $D_p^\star(\delta_l^+)$ are associated with the class of $Q_h$ when $\delta > \delta_l$; and no points are associated with the class of $Q_h(\br)$ when $\delta < \delta_l$. 

To better understand this phenomenon, consider that for an arbitrary partition $U$, we can extract from Eq.\ \eqref{eq:imperrordef} the part of the expected classification error that is due to $\mathbb D_p^\star(\delta_l^+)$ when $\delta = \delta_l$.  Denote this by $\mathcal E_{\rm part}[U]$ given by
\begin{align}
\mathcal E_{\rm part}[U] &= \delta_l \int_{\mathbb D_p^\star(\delta_l^+)\cap D_n} \hspace{-15mm} Q_h(\br) \,\,\,\d r 
 +  (1-\delta_l) \int_{\mathbb D_p^\star(\delta_l^+)\cap D_p}\hspace{-15mm} Q_l(\br) \,\,\,\d r \nonumber \\
 &= \frac{\alpha_h \alpha_l}{\alpha_l+\alpha_h} \left [ \int_{\mathbb D_p^\star(\delta_l^+)\cap D_n} \hspace{-10mm} P(\br) \,\,\,\d r  
  + \int_{\mathbb D_p^\star(\delta_l^+)\cap D_p}\hspace{-10mm} P(\br) \,\,\,\d r \right ], \label{eq:impcomp}
\end{align}
where we use the fact that
\begin{align}
Q_j(\br)&=\alpha_j P(\br) && \br \in \mathbb D_p^\star(\delta_l^+). \label{eq:mjlim}
\end{align}
Equation \eqref{eq:impcomp} implies that at $\delta=\delta_l$, the $\delta$-weighted PDFs $Q_l(\br)$ and $Q_h(\br)$ are identical on $D_p^\star(\delta_l^+)$.  Thus, for $\delta < \delta_l$, $(1-\delta) Q_l(\br)$ is always greater than $\delta Q_h(\br)$ on $D_p^\star(\delta_l^+)$, and in fact, on all of $\Gamma$.  This observation and Eq.\ \eqref{eq:mjlim} motivate the following estimate of the $\alpha_j$.

\begin{construction}[System of Equations for $\alpha_j$]\label{constr:alpha}
Consider a linearly independent population, and assume that $0 < \alpha_l < \alpha_h$.  Assume also that the conditions of Lemma \ref{lem:alpha} hold.  We construct a system of equations for determining the $\alpha_j$ and show that up to degenerate cases, the solution is unique.

Note first that inverting Eq.\ \eqref{eq:delbds} for the $\alpha_j$ yields
\begin{align}
\alpha_l&=\frac{\delta_l(2\delta_h-1)}{\delta_h-\delta_l}, & \alpha_h&=\frac{(1-\delta_l)(2\delta_h-1)}{\delta_h-\delta_l}, \label{eq:alphadef} \\
1-\alpha_l &= \frac{\delta_h(1-2\delta_l)}{\delta_h-\delta_l}, & 1-\alpha_h&=\frac{(1-2\delta_l)(1-\delta_h)}{\delta_h-\delta_l}.
\end{align}
Thus, one finds
\begin{align}
\frac{\alpha_l}{\alpha_h} &= \frac{\delta_l}{1-\delta_l} & \frac{1-\alpha_h}{1-\alpha_l}=\frac{1-\delta_h}{\delta_h}. \label{eq:alpharatios}
\end{align}

Next observe that by Lemma \ref{lem:alpha}, 
\begin{subequations}
\begin{align}
Q_{j,\mathbb D_p^\star(\delta_l^+)}&=\alpha_j P_{\mathbb D_p^\star(\delta_l^+)} \label{eq:lowerMj} \\
Q_{j,\mathbb D_n^\star(\delta_h^-)}&=(1-\alpha_j) N_{\mathbb D_n^\star(\delta_h^-)} \label{eq:upperMj}
\end{align}
\end{subequations}
for $j\in \{l,h\}$.  The probability measures can be eliminated by taking ratios of the form
\begin{subequations}
\begin{align}
\frac{Q_{l,\mathbb D_p^\star(\delta_l^+)}}{Q_{h,\mathbb D_p^\star(\delta_l^+)}} = \frac{\alpha_l}{\alpha_h} = \frac{\delta_l}{1-\delta_l}, \label{eq:M0dl} \\
\frac{Q_{h,\mathbb D_n^\star(\delta_h^-)}}{Q_{l,\mathbb D_n^\star(\delta_h^-)}} = \frac{1-\alpha_h}{1-\alpha_l} = \frac{1-\delta_h}{\delta_h}. \label{eq:M1dh}
\end{align}
\end{subequations}
Equations \eqref{eq:M0dl} and \eqref{eq:M1dh} only depend on either $\delta_l$ or $\delta_h$, but not both.  Thus each equation can be solved independently.  

Consider next Eq.\ \eqref{eq:M0dl}.  The left-hand side (LHS) yields an indeterminate form for any $\delta < \delta_l$.  For $\delta>\delta_l$ observe that by linearity of integration, the ratio
\begin{align}
\frac{Q_{l,\mathbb D_p^\star(\delta)}}{Q_{h,\mathbb D_p^\star(\delta)}} = \frac{Q_{l,\mathbb D_p^\star(\delta_l^+)}+ \Delta Q_l}{Q_{h,\mathbb D_p^\star(\delta_l^+)} + \Delta Q_h}
\end{align}
for some increments $\Delta Q_l$ and $\Delta Q_h$.  But by Eq.\ \eqref{eq:ID1}, one finds that
\begin{align}
\frac{Q_{l,\mathbb D_p^\star(\delta_l^+)}+ \Delta Q_l}{Q_{h,\mathbb D_p^\star(\delta_l^+)} + \Delta Q_h} &\le \frac{\frac{\delta_l}{1-\delta_l}Q_{h,\mathbb D_p^\star(\delta^+_l)} + \frac{\delta}{1-\delta} \Delta Q_h}{Q_{h,\mathbb D_p^\star(\delta_l^+)} + \Delta Q_h} \nonumber \\
& < \frac{\delta}{1-\delta},
\end{align}
where the second line arises by observing that the first line is a weighted average of $\delta_l/(1-\delta_l)$ and $\delta/(1-\delta)$.  Thus, $\delta_l$ is the only solution to Eq.\ \eqref{eq:lowerMj}.  A similar argument applies to the case of Eq.\ \eqref{eq:M1dh}. \hfill {\small Q.E.F.}
\end{construction}

Equations \eqref{eq:M0dl} and \eqref{eq:M1dh} are given in terms of exact measures, which are generally unknown.  In light of Part I, it is useful to construct estimators of the $\alpha_j$ in terms of empirical data.  In doing so, one must address not only sampling variation due to having finite data, but also uncertainty in the models used to construct the classifier.  The latter task requires an understanding of rates of convergence the chosen classifier, which is beyond the scope of the present manuscript; see Part I for a discussion in the context of the homotopy classifier used in this manuscript \cite{PartI}.  Here we present a simple method that approximates the impact of sampling variation alone, leaving more advanced estimators for future work.  

It is necessary to introduce several auxiliary concepts.  

\begin{definition}[Linearly Independent Empirical Populations]
Let $\Omega$ be a sample space and $C(\omega)$ be a binary random variable for $\omega \in \Omega$.  Let $q\in [0,1]$, and assume the $\omega_i$ are {\it iid}.    We define 
\begin{align}
\Psi(q,s)=\left\{ \br(\omega_i):i\in \{1,...,s\}, \sum_{i=1}^s\frac{C(\omega_i)}{s} = q \right\} \label{eq:impureset} 
\end{align}
to be an {\bf empirical test population} with $s$ samples and prevalence $q$.  We further define $\Pi_{\rm impure}$ to be \textbf{linearly independent empirical populations} if
\begin{align}
\Pi_{\rm impure} = \{\Psi_l(\alpha_l,s_l),\Psi_h(\alpha_h,s_h)\}
\end{align}
is a set of distinct empirical test populations whose sample points are iid and for which $\alpha_l \ne \alpha_h$.
\end{definition}
See Part I for additional context on empirical populations \cite{PartI}.

\begin{construction}[Bayesian Approximation of $\alpha_l$ and $\alpha_h$]\label{constr:bayesian}
Let $\Pi_{\rm impure}$ be an impure training population whose prevalences $\alpha_l$ and $\alpha_h$ are unknown.  Also let $G=\{\delta_k\}$ be a grid of $\delta_k$ values satisfying $\delta_k \in (0,1)$ and $\delta_k < \delta_{k+1}$.  Assume that $\mathcal U=\{U_k^\star\}$ is a set of partitions constructed so that $U_k^\star$ minimizes the error $\mathcal E_I(U;Q_l,Q_h,\delta_k)$.  We construct a Bayesian estimate of the unknown prevalences $\alpha_j$.

Consider Eq.\ \eqref{eq:M0dl} and a corresponding estimate of $\delta_l$.  Let $j\in \{l,h\}$ The partitions $U_k^\star$ induce approximate measures
\begin{align}
\tilde Q_{j,k} = \sum_{\br \in \Psi_j(\alpha_j,s_j)} \frac{\mathbb I(\br \in \mathbb D_p^\star(\delta_k))}{s_j},
\end{align}
which are binomial random variables.  Moreover, for a fixed $k$, the $\tilde Q_{l,k}$ and $\tilde Q_{h,k}$ are independent, since the underlying populations are independent.  The Wilson score interval \cite{Wilson} yields an estimate of the confidence interval for a binomial random variable by first taking $z$ to be the z-score associated with $X$ were we able to treat $\tilde Q_{j,k}$ as a normal random variable.  That is, $|z|=2$ corresponds to a 95\% confidence interval in the estimate of the expected value of $\tilde Q_{j,k}$.  Then, the true measure is estimated to be in the range
\begin{subequations}
\begin{align}
Q_{j,k} &\in [Q_{j,k}^-,Q_{j,k}^+] \\
Q_{j,k}^\pm &= \frac{s_j}{s_j+z^2}\Bigg[\tilde Q_{j,k} + \frac{z^2}{2s_j} \nonumber \\
&\quad \pm \frac{z^2}{2s_j}\sqrt{2s_j \tilde Q_{j,k}(1-\tilde Q_{j,k}) + z^2}\Bigg].
\end{align}
\end{subequations}
By Construction \ref{constr:alpha}, we know that 
\begin{align}
\frac{Q_{l,\mathbb D_p^\star(\delta)}}{Q_{h,\mathbb D_p^\star(\delta)}} < \frac{\delta}{1-\delta}
\end{align}
for $\delta_l < \delta$.  Moreover, $Q_{l,\mathbb D_p^\star(\delta)}/Q_{h,\mathbb D_p^\star(\delta)}$ reverts to a degenerate form for $\delta < \delta_l$.  Thus, for a fixed $z$, we consider the admissible range $\Delta$ of values of $\delta_l$ to be 
\begin{align}
\Delta(z) = \left\{\delta_k: \frac{Q_{l,k}^+}{Q_{h,k}^-} \ge \frac{\delta_k}{1-\delta_k}  \right\} \label{eq:pseudoset}
\end{align} 
If $\Delta(z)$ defined by Eq.\ \eqref{eq:pseudoset} is empty, we instead define $\Delta(z)=\{0\}$.  

If $\Delta(z)=\{0\}$, this implies that $\delta_l=\alpha_l=0$, which is our estimator.  If $\Delta(z) \neq \{0\}$, let $|\Delta(z)|$ be the cardinality of $\Delta(z)$.  As our prior, we then assume that all $\delta \in \Delta(z)$ have equal probability of being $\delta_l$.

To compute a posterior distribution for $\delta_k$, observe that Eq.\ \eqref{eq:M0dl} can be expressed as 
\begin{align}
\mathcal Q_k = Q_{l,\mathbb D_p^\star(\delta)}(1-\delta) - Q_{h,\mathbb D_p^\star(\delta)}\delta = 0. \label{eq:meanzero}
\end{align}
Given the empirical data, we seek to estimate the probability that this equation is satisfied.  In particular, for each $\delta_k$, approximate
\begin{align}
Q_{j,\mathbb D_p^\star(\delta_k)} = \mathcal N\left(\tilde Q_{j,\mathbb D_p^\star(\delta_k)},\frac{\tilde Q_{j,\mathbb D_p^\star(\delta_k)}(1-\tilde Q_{j,\mathbb D_p^\star(\delta_k)})}{s_j} \right)\nonumber 
\end{align}
where $\mathcal N(\mu,\sigma^2)$ is a normal random variable with mean $\mu$ and variance $\sigma^2$.  In light of this approximation, the LHS of Eq.\ \eqref{eq:meanzero} is a normal random variable, from which we can construct the probability density $\mathcal P(\mathcal Q_k|\delta_k)$.  Using the definition of conditional probability, we then define the probability that $\delta_k$ is $\delta_l$ to be
\begin{align}
\mathcal P(\delta_k|\tilde Q_k) = \frac{ \mathcal P(\mathcal Q_k| \delta_k) \mathbb I(\delta_k \in \Delta(z))}{ \sum_k \mathcal P(\mathcal Q_k| \delta_k) \mathbb I(\delta_k \in \Delta(z)) }.
\end{align}
From this, we can compute an expected value of $\delta_l$ and corresponding confidence intervals.  A similar construction yields $\delta_h$.  Note finally that inverting Eqs.\ \eqref{eq:alphadef} yield the $\alpha_j$.  \hfill {\small Q.E.F}
\end{construction}

Constructions \ref{constr:alpha} and \ref{constr:bayesian} are limiting insofar as they require the supports of $N(\br)$ and $P(\br)$ to be partially disjoint.  In practice, this condition may not always hold; rather, it is more reasonable to assume that there exist domains $\mathcal D_+$ and $\mathcal D_-$ on which the function $R(\br) = N(\br)/P(\br) \ll 1$ and $R(\br)^{-1} = P(\br)/N(\br) \ll 1$, respectively.  In such cases, we can still construct reasonable approximations of the $\alpha_j$.  The following definition and construction makes this precise in an asymptotic sense.  

\begin{definition}
The ratio $R: \Gamma \to [0,\infty]$ defined as
\begin{align}
R(\br) = \frac{N(\br)}{P(\br)} \label{eq:relprob}
\end{align}
is the {\bf relative conditional probability} of a measurement outcome, where we interpret the situation $R(\br)= \infty$ to correspond to $P(\br)=0$ and omit from $\Gamma$ points for which $P(\br)=N(\br)=0$.
\end{definition}

\begin{notation}
We use the notation $f(x)=\mathcal O(x)$ as $x\to 0$ to mean that for some $0 < K < \infty$, there exists an $\epsilon$ such that $f(x) \le K x$  for $ 0 \le x < \epsilon$.   
\end{notation}

\begin{construction}[Asymptotic Approximation of $\alpha_j$]
Let $R_D= N_D/P_D$, and assume that there exists domains $\mathcal D_+$ and $\mathcal D_-$ with corresponding $R_+ = R_{\mathcal D_+}$ and $R_- =R_{\mathcal D_-}$ satisfying
\begin{subequations}
\begin{align}
\frac{2-\alpha_l-\alpha_h}{\alpha_l+\alpha_h}R_+ &< 1 \\
\frac{\alpha_l+\alpha_h}{2-\alpha_l-\alpha_h}R_-^{-1} & < 1.
\end{align}
\end{subequations}
These inequalities yield asymptotic estimates of $\delta_l$, $\delta_h$, $\alpha_l$, and $\alpha_h$ via Eqs.\ \eqref{eq:M0dl} and \eqref{eq:M1dh}.  In particular, define $\tilde \delta_l$ and $\tilde \delta_h$ via
\begin{subequations}
\begin{align}
\frac{Q_{l,\mathcal D_+}}{Q_{h,\mathcal D_+}} &= \frac{\alpha_l + (1-\alpha_l)R_+}{\alpha_h + (1-\alpha_h)R_+} = \frac{\tilde \delta_l}{1-\tilde \delta_l} \label{eq:Rineq1} \\
\frac{Q_{h,\mathcal D_-}}{Q_{l,\mathcal D_-}} &= \frac{1-\alpha_l + \alpha_l R_-^{-1}}{1-\alpha_h + \alpha_h R_-^{-1}} = \frac{1- \tilde \delta_h}{\tilde \delta_h}.\label{eq:Rineq2}
\end{align}
\end{subequations}
Inverting these yields 
\begin{subequations}
\begin{align}
\tilde \delta_l &= \frac{\alpha_l}{\alpha_l+\alpha_h}\left[1+ \left( \frac{2-\alpha_l-\alpha_h}{\alpha_l+\alpha_h} \right)R_+ \right]^{-1} \\
\tilde \delta_h &= \frac{1-\alpha_l}{2-\alpha_l-\alpha_h}\left [\frac{1+\frac{\alpha_l}{1-\alpha_l}R_-^{-1}}{1+\frac{\alpha_l+\alpha_h}{2-\alpha_l-\alpha_h}R_-^{-1}} \right]
\end{align}
\end{subequations}
By inequalities \eqref{eq:Rineq1} and \eqref{eq:Rineq2}, the approximations $\tilde \delta_l$ and $\tilde \delta_h$ can be expanded in powers of $R_+$ and $R_-$.  Letting $\tilde R = \max[R_+,R_-^{-1}]$, one thus finds that 
\begin{align}
\tilde \delta_l = \delta_l + \mathcal O(\tilde R) && \tilde \delta_h = \delta_h + \mathcal O(\tilde R)
\end{align}
as $\tilde R \to 0$.  By solving for the $\alpha_j$ [e.g.\ via Eq.\ \eqref{eq:alphadef}], one recovers similar asymptotic estimates for the prevalences directly.  \hfill {\small Q.E.F}
\end{construction}

The following examples illustrate the main ideas of this section.

\begin{figure}
\includegraphics[width=13cm]{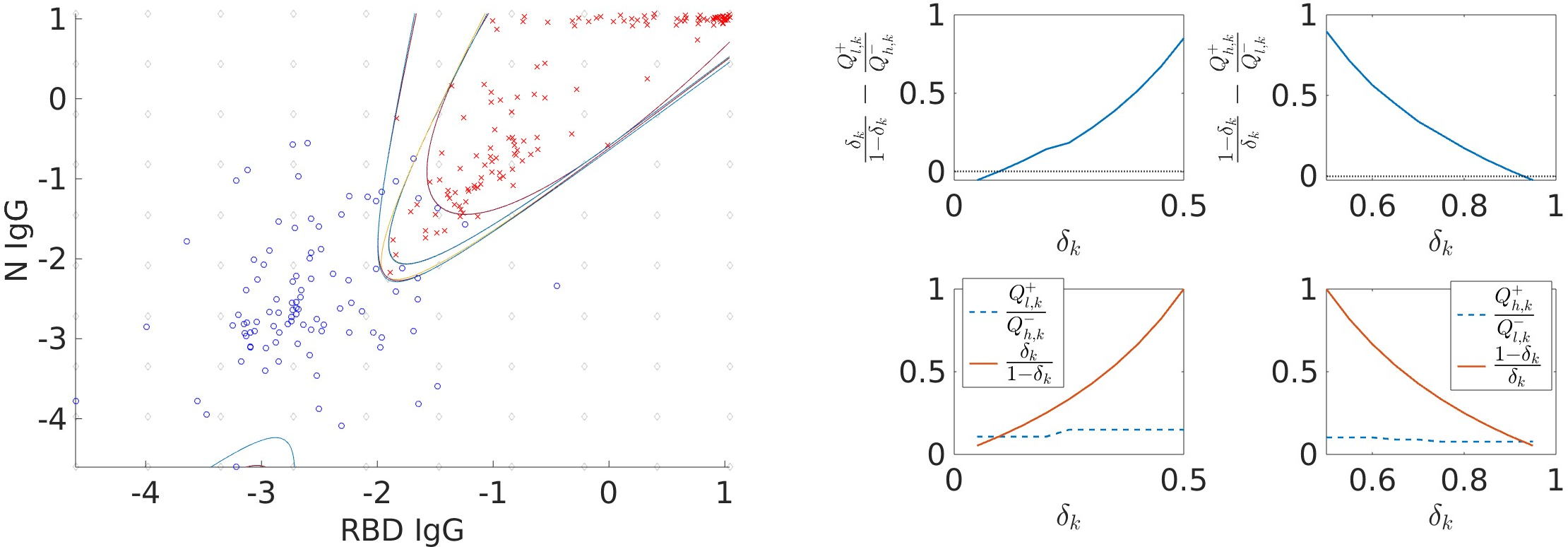}\caption{Example SARS-CoV-2 antibody measurement data and interpretation thereof.  {\it Left}: Red $\textcolor{red}{\times}$ are presumed to be positive for COVID-19 infection, whereas blue \textcolor{blue}{o} are prepandemic samples.  The classification boundaries were computed using the constrained homotopy classifier in Part I \cite{PartI}.  The associated objective function is a sum of pseudoprevalence-weighted empirical errors, with $\delta$ ranging in increments of $0.05$ from $0.05$ to $0.95$.  Note that many such boundaries have collapsed onto one another due to the finite amount of training data.  The dim black diamonds are shadow points associated with the constraints.  See Part I for more details \cite{PartI}. \textit{Right:} Various aspects of \cref{constr:bayesian} and the terms defining \cref{eq:pseudoset}.}\label{fig:sarsdata}
\end{figure}

\begin{example}[SARS-CoV-2 Assay]\label{ex:sars}
A key problem during the COVID-19 pandemic was a lack of control samples associated with positive and negative individuals.  The data from Ref.\ \cite{Raquel1} provides one such example.  The left plot of \cref{fig:sarsdata} shows typical serology measurements of two immunoglobulin g (IgG) antibodies that bind to the the SARS-CoV-2 receptor binding domain (RBD) and nucleocapsi (N).  However, the presumed positive class was based hospitalized patients with COVID-like symptoms, and orthogonal confirmation of the sample status was not available in all cases.  Thus, this data should be treated as impure, since some of the patients could have had other respiratory conditions such as influenza.  The right plot of \cref{fig:sarsdata} illustrates the results of \cref{constr:bayesian} applied to this data.  \Cref{constr:bayesian} indicates that the presumed positive samples had a prevalence of $95\%$ with a 95\% confidence interval of $[89\%,100\%]$, with $z=3$ for the prior.  This result is consistent with the understanding that the majority of the patients were COVID-19 positive based on symptoms.   Note that while the left subplots of \cref{fig:sarsdata} show the inequalities used to estimate $\alpha_l$, this quantity was set to $0$, as the corresponding samples were known to have been collected before the COVID-19 pandemic.
\end{example}

\begin{example}[Gaussian Distributions Revisited]\label{ex:gaussian}
\Cref{fig:gaussian} shows the same analysis as \cref{ex:sars}, except that the underlying PDFs are given by \cref{eq:PDF3} and \cref{eq:PDF4} with $\alpha_h=0.8$ and $\alpha_l=0.2$.  Five hundred datapoints were generated for each population.  The resulting classification boundaries are known to be quadratic in this example and can be estimated using the analysis of Part I \cite{PartI}.  The prevalence $\alpha_l$ was estimated to be $21\%$ with a 95\% confidence interval of $[8\%,23\%]$.  The prevalence $\alpha_h$ was estimated to be $89\%$, with a 95\% confidence interval of $[69\%,91\%]$.
\end{example}

\begin{figure}
\includegraphics[width=13cm]{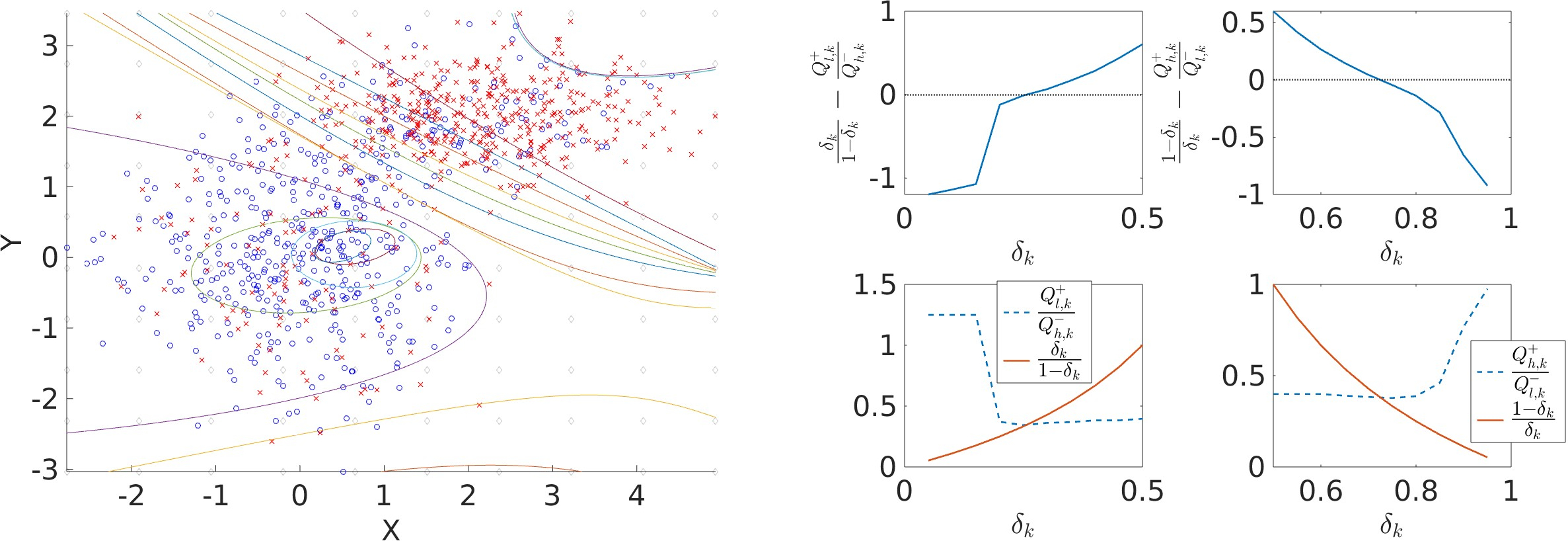}\caption{Synthetic data and interpretation thereof.  {\it Left}: Red $\textcolor{red}{\times}$ are drawn from \cref{eq:PDF3} with $\alpha_h=0.8$.  Blue \textcolor{blue}{o} drawn from \cref{eq:PDF4} with $\alpha_l=0.2$.  The classification boundaries were computed using the constrained homotopy classifier in Part I \cite{PartI}.  The associated objective function is a sum of pseudoprevalence-weighted empirical errors, with $\delta$ ranging in increments of $0.05$ from $0.05$ to $0.95$.  Note that many such boundaries have collapsed onto one another due to the finite amount of training data.  The dim black diamonds are shadow points associated with the constraints.  See Part I for more details \cite{PartI}. \textit{Right:} Various aspects of \cref{constr:bayesian} and the terms defining \cref{eq:pseudoset}.}\label{fig:gaussian}
\end{figure}

\subsection{Prevalence Estimation and Classification Revisited}
\label{subsec:impprev}

Once the $\alpha_j$ are known, we can use the impure training data to estimate the prevalence of a third test population without knowing the $P(\br)$ or $N(\br)$.  The following lemma provides needed context and is discussed in more detail in Part I \cite{PartI}.

\begin{lemma}[Class-Agnostic Prevalence Estimator]
Let $C(\omega)$ be a binary random variable.  Let $D \subset \Gamma$ be a subdomain chosen such that $|P_D-N_D|>0$.  Let there be $s$ {\it iid} random variables $\br_i$ {\rm (}$i \in \{1,2,...,s\}${\rm )} drawn from $Q(\br;q)$.  Then 
\begin{align}
\tilde q = \frac{\tilde Q_D - N_D}{P_D - N_D}, && \tilde Q_D = \frac{1}{s}\sum_{i=1}^s \mathbb I(\br_i \in D) \label{eq:prevest}
\end{align}
is an unbiased estimate of $q$ that converges in mean-square as $s\to \infty$. \label{lem:qprev}
\end{lemma}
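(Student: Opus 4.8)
The plan is to recognize that each indicator $\mathbb I(\br_i \in D)$ is a Bernoulli random variable whose success probability is exactly the measure $Q_D = \int_D Q(\br;q)\,\d\br$, and then to exploit the affine dependence of $Q_D$ on $q$ that is built into Definition~\ref{def:qdef}. First I would compute $Q_D$: substituting \eqref{eq:qdef} and using linearity of the integral gives $Q_D = qP_D + (1-q)N_D = N_D + q(P_D - N_D)$. Since the $\br_i$ are iid draws from $Q(\br;q)$, the variables $\mathbb I(\br_i \in D)$ are iid $\mathrm{Bernoulli}(Q_D)$, so $\mathbb E[\tilde Q_D] = Q_D$ and $\operatorname{Var}(\tilde Q_D) = Q_D(1-Q_D)/s$.

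Unbiasedness is then immediate: because $P_D - N_D \ne 0$ by hypothesis, $\tilde q$ is well defined, and by linearity of expectation $\mathbb E[\tilde q] = (\mathbb E[\tilde Q_D] - N_D)/(P_D - N_D) = (N_D + q(P_D - N_D) - N_D)/(P_D - N_D) = q$. For mean-square convergence, note that $\tilde q - q = (\tilde Q_D - Q_D)/(P_D - N_D)$, so $\mathbb E[(\tilde q - q)^2] = \operatorname{Var}(\tilde Q_D)/(P_D - N_D)^2 = Q_D(1-Q_D)/[\,s\,(P_D - N_D)^2\,]$, which tends to $0$ as $s \to \infty$ since $Q_D(1-Q_D) \le 1/4$.

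The argument is essentially bookkeeping, and there is no real obstacle. The only points that require care are (i) that $\tilde q$ is well defined, which is exactly why the hypothesis $|P_D - N_D| > 0$ is imposed, and (ii) that the relevant integrals exist and the variance is finite --- both guaranteed here since all PDFs are normalized and absolutely continuous, so $P_D, N_D, Q_D \in [0,1]$. One could additionally remark that the same computation shows the mean-square error is, for fixed $s$, decreasing in $|P_D - N_D|$, which motivates choosing $D$ to separate the conditional PDFs as much as possible and connects back to the optimal classifiers of Lemma~\ref{lem:optclass}; but that observation is not needed for the stated claim.
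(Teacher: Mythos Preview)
Your argument is correct and complete. Note, however, that the present manuscript does not actually prove this lemma: it merely states the result and refers the reader to Part~I for details, so there is no in-paper proof to compare against. Your approach---identifying the indicators as iid $\mathrm{Bernoulli}(Q_D)$ variables, using the affine relation $Q_D = N_D + q(P_D - N_D)$ that follows from Eq.~\eqref{eq:qdef}, and reading off unbiasedness and the $\mathcal O(1/s)$ variance---is the natural one and is almost certainly what Part~I does as well.
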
  

To estimate prevalence, it suffices to determine $P_D$ and $N_D$ in terms of the PDFs of the impure populations.  To accomplish this, define the measures
\begin{align}
Q_{j,D} = \int_D Q_j(\br) dr.
\end{align}
Next observe that by definition, 
\begin{align}
\begin{pmatrix}
Q_{l,D} \\ Q_{h,D}
\end{pmatrix}
= 
\begin{bmatrix}
\alpha_l & 1-\alpha_l \\
\alpha_h & 1-\alpha_h
\end{bmatrix}
\begin{pmatrix}
P_D \\ N_D
\end{pmatrix}, \label{eq:mateq}
\end{align}
which can be inverted to yield
\begin{align}
\begin{pmatrix}
P_D \\ N_D
\end{pmatrix} =
\frac{1}{\alpha_h - \alpha_l}
\begin{bmatrix}
\alpha_h-1 & 1-\alpha_l \\
\alpha_h & -\alpha_l
\end{bmatrix}
\begin{pmatrix}
Q_{l,D} \\ Q_{h,D}\label{eq:imateq}
\end{pmatrix}
\end{align}
Substituting empirical, Monte Carlo estimates of $Q_{l,D}$ and $Q_{h,D}$ for a suitably chosen $D$ yields corresponding empirical estimates of $N_D$ and $P_D$ \cite{montecarlo}.  See Part I for a deeper discussion of prevalence estimation, especially in the context of exmpirical data \cite{PartI}.

\begin{remark}
Observe that the determinant of matrix in Eq.\ \eqref{eq:mateq} is $\alpha_h-\alpha_l$.  When $\alpha_h=\alpha_l$,  the linear system clearly becomes ill-posed.  Thus the concept of linearly independent populations is directly tied to the structure of the matrix equation that connects mixed distributions to their pure counterparts.  
\end{remark}

\section{Discussion}
\label{sec:discussion}

\subsection{Our Interpretation of Unsupervised Learning}

Our interpretation of unsupervised learning is slightly less general than commonly accepted definitions in at least two ways \cite{unsupervised}.  First, while the true class of each sample is unknown (i.e.\ the underlying PDFs of training data are unknown), we assume there are two impure populations with different prevalence values.  Our analysis also requires \textit{a priori} knowledge that the training data is comprised of two classes.  In this sense, one might argue that we have connected aspects of linear algebra and \textit{weakly or semi-supervised} learning, although such issues may ultimately amount to semantics.  It is possible that further connections to linear algebra may clarify the concept of unsupervised learning as a generalization of supervised learning.  Such issues are discussed in more detail in the next section.

\subsection{Connection to Linear Algebra}
\label{subsec:linalg}

\Cref{def:linind} and \cref{eq:imateq} establish a connection between linear algebra and classification.  A fundamental question is the extent to which this connection can be extended.  The matrix
\begin{align}
\mathcal A = \begin{bmatrix}
\alpha_l & 1-\alpha_l \\
\alpha_h & 1-\alpha_h
\end{bmatrix}
\end{align}
suggests one route to accomplishing this.  It is clear that the columns of $\mathcal A$ are linearly independent when $\alpha_l \ne \alpha_h$, and in fact \cref{eq:mateq} can be interpreted as an alternate form of \cref{def:linind}.  Thus, we anticipate that the following definition enables generalizations of our results to multiclass settings.
\begin{definition}
Let $\mathcal A$ be an $m\times m$ right stochastic matrix (i.e.\ rows sum to one) that is invertible \cite{StochasticMatrix1}.  Let $\boldsymbol {\rm P}(\br)=[P_1(\br),...,P_m(\br)]^{\rm T}$ be a vector of distinct probability density functions.  Then we say that the distributions $\boldsymbol {\rm Q}$ corresponding to \textbf{linearly independent populations} if 
\begin{align}
\boldsymbol {\rm Q}(\br) = \mathcal A \boldsymbol {\rm P}(\br). \label{eq:genmateq}
\end{align}  
\end{definition}

The perspective of \cref{eq:genmateq} is suggestive of recent works that explored related properties classifiers in the context of linear algebra.  Reference \cite{AssayInequality} in particular demonstrated that the uncertainty in prevalence estimates associated with the case $\mathcal A=\boldsymbol 1$ (where $\boldsymbol 1$ is the identity matrix) is controlled by the largest Gershgorin radius $\rho_{\max}$ of a matrix $\boldsymbol 1- \mathbb P$ \cite{Gershgorin1,Gershgorin2}, where the elements $P_{j,k}$ of $\mathbb P$ are
\begin{align}
P_{j,k} = \int_{D_j}P_k(\br) {\rm d}\br,
\end{align}
and the $D_j$ form a partition of $\Gamma$.  Moreover, one can show that $\rho_{\max}$ is minimized by a partition whose resulting matrix $\mathbb P$ is closest to the identity in some appropriate sense; thus $\mathbb P$ can be interpreted as a type of confusion matrix, where the overlap between conditional distributions increases uncertainty in prevalence estimates.  From this perspective, one sees immediately that \cref{eq:genmateq} likewise increases uncertainty in prevalence estimates as $\mathcal A$ deviates from the identity.  While beyond the scope of this work, we anticipate that rates of convergence of estimators using impure training distributions can be deduced via the methods developed in Ref.\ \cite{AssayInequality}, and moreover, these rates should be slower than when using pure training distributions.

\subsection{Limitations and Open Questions}
\label{subsec:limitations}

A fundamental and unresolved issue in this work is to deduce the convergence properties of methods such as \cref{constr:bayesian}.  It is also likely that better estimators can be formulated, given that \cref{constr:bayesian} does not account for correlation between the measures used in \cref{eq:pseudoset}.  Addressing these issues is complicated by the fact that estimates of the optimal classification domains themselves have error due to the finite amount of data used to construct them.  Resolution of such questions is reserved for future work.

While \cref{subsec:linalg} points to analogies in linear algebra that may facilitate extensions to multiclass settings, such approaches will likely need to contend with a variation on \textit{conditioning} \cite{NA}  The matrix $\mathcal A$ hints at such issues; when $\alpha_l \approx \alpha_h$, the linear system given by \cref{eq:mateq} becomes ill-conditioned.  The finite data used to construct classifiers and estimate prevalence are likely to further compound such issues.  Thus, in multiclass settings, we anticipate that extra care must be taken to ensure that the linearly independent populations have prevalence values that are nearly orthogonal in some appropriate sense.  Alternatively, methods must be developed to stabilize numerical methods.  Such issues are also left for future work.

{\it Acknowledgements:} This work is a contribution of the National Institutes of Standards and Technology and is therefore not subject to copyright in the United States.  RB, CF, and AM were supported under the US National Cancer Institute, Grant U01 CA261276 (The Serological Sciences Network), Massachusetts Consortium on Pathogen Readiness (MassCPR) Evergrande COVID-19 Response Fund Award, and University of Massachusetts Chan Medical School COVID-19 Pandemic Research Fund.  Certain commercial equipment, instruments, software, or materials are identified in this paper in order to specify the experimental procedure adequately. Such identification is not intended to imply recommendation or endorsement by the National Institute of Standards and Technology, nor is it intended to imply that the materials or equipment identified are necessarily the best available for the purpose.

{\it Use of all data deriving from human subjects was approved by the NIST and University of Massachusetts Research Protections Offices.}

{\it Data availability:} Data associated with the SARS-CoV-2 assay is available for download as supplemental material to Ref.\ \cite{Raquel1}.  An open-source software package implementing these analyses is under preparation for public distribution.  In the interim, a preliminary version of the software will be made available upon reasonable request.

\bibliographystyle{siamplain}
\bibliography{curved}

\end{document}